\PassOptionsToPackage{unicode}{hyperref}
\PassOptionsToPackage{hyphens}{url}
\PassOptionsToPackage{dvipsnames,svgnames,x11names}{xcolor}
\documentclass[12pt]{article}

\usepackage{amsmath,amssymb,amsthm}
\usepackage[T1]{fontenc}
\usepackage[utf8]{inputenc}
\usepackage{textcomp}
\usepackage{lmodern}
\IfFileExists{microtype.sty}{\usepackage{microtype}}{}
\usepackage{xcolor}
\usepackage{booktabs,array,multirow}
\usepackage{enumitem}
\usepackage{graphicx}
\usepackage{algorithm}
\usepackage{algpseudocode}
\usepackage{float}
\usepackage[hidelinks]{hyperref}
\usepackage{bookmark}
\usepackage{parskip} 

\usepackage{natbib}
\newtheorem{theorem}{Theorem}
\newtheorem{proposition}[theorem]{Proposition}
\newtheorem{corollary}[theorem]{Corollary}

\newtheorem{remark}[theorem]{Remark}
\newtheorem{definition}[theorem]{Definition}
\newtheorem{assumption}{Assumption}

\newcommand{\R}{\mathbb{R}}

\newcommand{\osc}{\mathrm{osc}}

\newcommand{\anon}{1}

\begin{document}

\def\spacingset#1{\renewcommand{\baselinestretch}{#1}\small\normalsize}
\spacingset{1}

\if1\anon
{
  \title{\bf Self-Certifying Transport MCMC via Dual Spectral-Gap Certificates}
  \author{Jun Hu\\
    Wuhan University of Technology, Wuhan, China}
  \date{}
  \maketitle
} \fi

\if0\anon
{
  \bigskip\bigskip\bigskip
  \begin{center}
    {\LARGE\bf Self-Certifying Transport MCMC via Dual Spectral-Gap Certificates}
  \end{center}
  \medskip
} \fi

\bigskip
\begin{abstract}
We propose CerT-MCMC, a framework that equips learned-transport Markov chain Monte Carlo with automatic, rigorous convergence certificates.
A normalising flow maps a Gaussian reference to an approximation of the target posterior; the same flow then serves as both the independence Metropolis--Hastings proposal and the basis for a computable spectral-gap bound.
We develop two complementary certificates.
The \emph{covering certificate} bounds the weight-ratio oscillation over the full proposal support via finite-sample covering arguments, yielding full-support spectral-gap bounds when a conservative gradient bound is available; its correction term scales as $O(n^{-1/D})$, making it rapidly weak and eventually vacuous as dimension increases.
The \emph{quantile-core certificate} restricts attention to a high-probability residual core on which the oscillation is controlled by one-dimensional empirical quantiles, with a finite-sample probability slack of $O(n^{-1/2})$, independent of the ambient dimension.
On synthetic targets ($D = 2$--$20$), structural-engineering posteriors ($D = 6,8$), real-data logistic regression on the Heart Disease data set ($D = 13$), and synthetic Bayesian logistic regression ($D = 20$), the quantile-core certificate delivers non-vacuous spectral-gap bounds where the covering certificate is vacuous, and its spectral-gap proxy tracks empirical effective sample sizes within $7\%$.
A negative control experiment confirms that the certificate discriminates flow quality by a factor exceeding $10\times$, whereas acceptance rates differ by only $1.15\times$.
To our knowledge, the dual-certificate framework is the first to provide automatic, dimension-aware convergence certificates for learned-transport MCMC, distinguishing genuine transport failure from proof-technique limitations.
\end{abstract}

\noindent%
{\it Keywords:} Markov chain Monte Carlo; normalising flows; spectral gap; convergence certificate; independence Metropolis--Hastings; Bayesian computation.
\vfill

\newpage
\spacingset{1.8}

\section{Introduction}\label{sec:intro}

Markov chain Monte Carlo (MCMC) is the workhorse of Bayesian computation, yet verifying that a chain has converged remains an open problem.
Practitioners rely on heuristic diagnostics---trace plots, the $\hat{R}$ statistic \citep{GelmanRubin1992, BrooksGelman1998, Vehtari2021}, effective sample size estimates \citep{Vats2019}---that can detect certain pathologies but cannot certify convergence.
A spectral-gap bound, by contrast, provides a \emph{certificate}: a computable quantity that guarantees the chain mixes within a known number of iterations.
Such certificates exist in theory \citep{MengersenTweedie1996, Rosenthal1995, Tierney1994} but have rarely been turned into practical tools, because they require tight control of quantities---such as the importance-weight oscillation---that depend on the unknown target distribution.

As Bayesian workflows increasingly rely on learned, adaptive, and amortised proposals, convergence assessment becomes not easier but harder: the transition kernel itself is partly learned and high-dimensional, making \emph{auditability} a central requirement for modern Bayesian computation.

A recent line of work has shown that \emph{normalising flows} \citep{Rezende2015, Dinh2017, Papamakarios2021} can serve as powerful MCMC proposals.
By training an invertible neural network $T_\phi$ to map a simple reference distribution (typically a Gaussian) to an approximation of the target, one obtains an adaptive, amortised proposal for independence Metropolis--Hastings (IMH).
Existing methods focus on using this proposal to improve empirical mixing: \citet{ParnoMarzouk2018} use transport maps to accelerate MCMC in moderate dimensions; \citet{Hoffman2019NeuTra} apply normalising flows to precondition Hamiltonian Monte Carlo; \citet{Gabrie2022} and \citet{Wong2023flowMC} develop adaptive flow-based samplers that demonstrate strong empirical efficiency.
All of these methods preserve the correct stationary distribution via the Metropolis--Hastings accept/reject step, but none provides a computable convergence guarantee.
The central question of this paper is different from that of the transport-MCMC literature:

\begin{quote}
\emph{Prior work asks: can a learned proposal improve mixing?\\
We ask: can a learned proposal certify its own convergence?}
\end{quote}

We show that the answer is yes in two restricted but operationally useful senses, each confronting the fundamental tension between the dimensionality of the parameter space and the finite-sample cost of certification.

\paragraph{The certification route.}
When the proposal $q_\phi$ is used in an IMH kernel targeting $\pi$, the acceptance probability depends on the importance-weight ratio $w(z) = \pi_z(z)/q_0(z)$, where $\pi_z$ is the pullback of $\pi$ to the latent $z$-space.
The Mengersen--Tweedie theorem \citep{MengersenTweedie1996} states that the spectral gap of the IMH kernel is at least $2/(1 + e^C)$, where $C = \osc(\log w)$ is the oscillation of the log-weight over the proposal support.
If $T_\phi$ is a perfect transport map, $\log w$ is constant and $C = 0$; for an imperfect flow, $C$ measures the quality of the learned transport.
Crucially, the residual $r(z) = \log w(z)$ is computable for any $z$ given $T_\phi$ and $\pi$, without knowing the normalising constant of $\pi$.
This opens a path to data-driven certification: draw samples from the proposal, evaluate $r$ at each sample, and bound the oscillation from the observed values.

\paragraph{Certificate~I: covering the full support.}
The most direct approach bounds $\osc(r)$ by the sample range plus a Lipschitz correction for the gap between sample points.
Given $n$ certification samples in $D$ dimensions, the worst-case gap scales as $\varepsilon^* \sim n^{-1/D}$---the classical curse of dimensionality for covering arguments.
For $D = 2$ or $5$, this yields non-vacuous certificates at practical sample sizes; for $D \gtrsim 6$, the correction dominates and the certificate rapidly weakens; by $D = 20$ it is effectively vacuous.
We call this the \emph{covering certificate} (Certificate~I) and develop it formally in Section~\ref{sec:covering}.

\paragraph{Certificate~II: the quantile-core certificate.}
The covering certificate's failure in moderate dimensions is not a failure of the sampler.
Empirically, a well-trained flow produces a residual $r(z)$ that is highly concentrated: at $D = 10$, the central $99\%$ interquantile range is only $11\%$ of the full range (Table~\ref{tab:banana-quantiles}).
Certificate~I pays an exponentially growing price to control the remaining $1\%$ of boundary outliers.
Certificate~II avoids this entirely by restricting attention to a \emph{data-defined residual core}---the set of $z$-values whose residuals fall within the central quantile range---and certifying the IMH kernel restricted to this core.
The key insight is that the certified oscillation on the core depends only on one-dimensional quantile estimation, whose finite-sample probability slack is $O(n^{-1/2})$ via the Dvoretzky--Kiefer--Wolfowitz inequality, \emph{independent of the ambient dimension~$D$}.
Moreover, the normalising constants in the restricted importance weight cancel exactly (Section~\ref{sec:quantile}), so the Mengersen--Tweedie theorem applies directly to the core-restricted kernel.
The result is a computable, high-confidence spectral-gap lower bound that remains non-vacuous through $D = 20$ on the banana target and on real structural-engineering posteriors where Certificate~I is completely vacuous.
In effect, CerT-MCMC changes the role of learned transport from a black-box accelerator into an auditable object of Bayesian computation: the same map that proposes moves also exposes residual geometry from which finite-sample convergence certificates can be constructed.

\paragraph{Dual-certificate framework.}
Certificates~I and~II are complementary.
Certificate~I provides a full-support worst-case guarantee wherever the covering argument is tractable.
Certificate~II provides a high-mass core guarantee in all dimensions, honestly reporting the uncertified tail fraction~$2\rho$.
Together they yield a four-level diagnostic taxonomy (Figure~\ref{fig:hero}d): \emph{full certificate success} (both certificates non-vacuous), \emph{core certificate success} (Certificate~II non-vacuous, Certificate~I vacuous---the flow is good but the covering proof is the bottleneck), \emph{transport degradation} (Certificate~II non-vacuous but weak, reflecting genuine flow quality decline), and \emph{transport failure} (Certificate~II vacuous, indicating a poor flow).
This taxonomy distinguishes proof-technique limitations from genuine transport failure---a distinction that standard convergence diagnostics do not directly provide.

\paragraph{Contributions.}
\begin{enumerate}[nosep, label=(\roman*)]
\item We propose CerT-MCMC, a framework that equips learned-transport MCMC with automatic, rigorous spectral-gap certificates derived from the same normalising flow used as the IMH proposal.
\item We develop the \emph{covering certificate} (Certificate~I), prove a worst-case $\Omega(n^{-1/D})$ lower bound for pointwise full-support Lipschitz certification (Proposition~\ref{prop:lower}), and thereby formalise the covering barrier that motivates Certificate~II.
\item We develop the \emph{quantile-core certificate} (Certificate~II), which bounds the core oscillation via empirical quantiles with a finite-sample probability slack of $O(n^{-1/2})$ independent of $D$, and prove a spectral-gap theorem for the core-restricted IMH kernel.
\item We demonstrate empirically that the spectral-gap proxy from Certificate~II tracks actual mixing efficiency: on Bayesian logistic regression ($D = 20$), the proxy at a fixed trimming level matches the observed effective sample size within $7\%$.
\item We provide a comprehensive experimental evaluation on synthetic targets ($D = 2$--$20$), structural-engineering posteriors ($D = 6, 8$), synthetic Bayesian logistic regression ($D = 20$), and real-data Heart Disease logistic regression ($D = 13$), showing that Certificate~II delivers non-vacuous guarantees wherever Certificate~I is vacuous and correctly identifies transport quality differences across targets.
\end{enumerate}

\paragraph{Organisation.}
Section~\ref{sec:framework} introduces the transport-map framework, the Mengersen--Tweedie bound, and Certificate~I.
Section~\ref{sec:quantile} develops the quantile-core certificate (Certificate~II) with full proofs.
Section~\ref{sec:ess-theory} relates the core certificate to mixing efficiency.
Section~\ref{sec:experiments} presents experiments.
Section~\ref{sec:discussion} discusses limitations and future directions.

\section{Framework}\label{sec:framework}

\subsection{Transport map and independence Metropolis--Hastings}\label{sec:transport}

Let $\pi(\theta)$ denote the (possibly unnormalised) target density on $\Theta \subseteq \R^D$.
A normalising flow $T_\phi : \R^D \to \Theta$ with learnable parameters $\phi$ defines a pushforward density
\[
  q_\phi(\theta) = q_0\bigl(T_\phi^{-1}(\theta)\bigr) \,\bigl|\det J_{T_\phi^{-1}}(\theta)\bigr|,
\]
where $q_0 = \mathcal{N}(0, I_D)$ is the standard Gaussian reference.

We use $q_\phi$ as the proposal in an independence Metropolis--Hastings (IMH) kernel targeting $\pi$.
Working in the latent $z$-space via $\theta = T_\phi(z)$, the pullback target density is $\pi_z(z) = \pi(T_\phi(z))\,|\det J_{T_\phi}(z)|$, and the IMH acceptance probability from state $z$ to proposal $z'$ is
\[
  a(z, z') = \min\!\Bigl(1,\; \frac{w(z')}{w(z)}\Bigr), \qquad w(z) = \frac{\pi_z(z)}{q_0(z)} = e^{r(z)},
\]
where the \emph{log-weight residual} is
\begin{equation}\label{eq:residual}
  r(z) = \log \pi\bigl(T_\phi(z)\bigr) + \log\bigl|\det J_{T_\phi}(z)\bigr| + \tfrac{1}{2}\|z\|^2 + \tfrac{D}{2}\log(2\pi).
\end{equation}
If $T_\phi$ were a perfect transport map, $r(z)$ would be constant.
The oscillation of $r$ measures the quality of the learned transport.

\subsection{Truncated proposal and the Mengersen--Tweedie bound}\label{sec:mt}

Fix $\alpha \in (0,1)$ and define $K_\alpha = \{z : \|z\| \le R_\alpha\}$ with $R_\alpha = \sqrt{\chi^2_D(1-\alpha)}$, so that $q_0(K_\alpha) = 1-\alpha$.
The $K_\alpha$-restricted IMH kernel uses proposal $\mu_K = q_0(\cdot \mid K_\alpha)$ and targets $\pi_z(\cdot \mid K_\alpha)$.

By the Mengersen--Tweedie theorem \citep{MengersenTweedie1996}, the spectral gap of this kernel in $L^2(\pi_z(\cdot | K_\alpha))$ satisfies
\begin{equation}\label{eq:mt}
  \gamma \;\ge\; \frac{2}{1 + \exp\!\bigl(\osc_{K_\alpha}(r)\bigr)},
  \qquad \osc_{K_\alpha}(r) = \sup_{z \in K_\alpha} r(z) - \inf_{z \in K_\alpha} r(z).
\end{equation}
The certification task reduces to bounding $\osc_{K_\alpha}(r)$ from data.

\subsection{Covering certificate (Certificate~I)}\label{sec:covering}

The covering certificate bounds $\osc_{K_\alpha}(r)$ directly from a finite design of certification samples.
Given $n$ samples drawn over $K_\alpha$, the observed sample oscillation $\max_i r_i - \min_i r_i$ underestimates the true supremum because no finite design hits the exact extremal points; we therefore add a Lipschitz correction $2\,\|\nabla r\|_{\sup}\,\varepsilon^*$ that accounts for the residual variation between sample points and their nearest unobserved neighbours.
The correction radius $\varepsilon^*$ is the worst-case covering gap of $n$ points in the $D$-dimensional ball $K_\alpha$, obtained from a covering-number argument on $K_\alpha$, which is why $\varepsilon^*$ scales as $n^{-1/D}$.
Combining these two terms yields the computable bound $C_{\mathrm{I}} \ge \osc_{K_\alpha}(r)$, which plugs into the Mengersen--Tweedie bound~\eqref{eq:mt} to give a rigorous full-support spectral-gap guarantee.

Given $n$ certification samples $z_1, \ldots, z_n \sim \nu_K$, where $\nu_K = \mathrm{Unif}(K_\alpha)$ is a space-filling design on $K_\alpha$, with $r_i = r(z_i)$:
\begin{equation}\label{eq:v1-cert}
  C_{\mathrm{I}} = \max_i r_i - \min_i r_i + 2\,\|\nabla r\|_{\sup}\,\varepsilon^*,
  \qquad \varepsilon^* = R_\alpha \Bigl(\frac{\log(2/\zeta)}{2n}\Bigr)^{1/D},
\end{equation}
where $\|\nabla r\|_{\sup}$ is a conservative upper bound on the gradient norm (in experiments, we use the empirical maximum over certification samples as a practical bound; formal validity requires a certified upper bound).
The correction $\varepsilon^*$ grows as $n^{-1/D}$, making $C_{\mathrm{I}}$ vacuous for $D \gtrsim 6$ at practical sample sizes.
Equation~\eqref{eq:v1-cert} is a simplified representative form that highlights the $n^{-1/D}$ scaling; a formal version incorporating logarithmic covering-number factors is given in the Supplementary Material (Section~A).

\begin{remark}[Covering barrier]
The $n^{-1/D}$ rate is intrinsic to covering arguments in $D$ dimensions and cannot be improved by local-Lipschitz refinements or boundary truncation; see the Supplementary Material (Section~A) for empirical confirmation.
\end{remark}

\begin{proposition}[Covering lower bound]\label{prop:lower}
For the class of $L$-Lipschitz functions on $K_\alpha$, any algorithm that queries $r$ at $n$ points $z_1,\ldots,z_n \in K_\alpha$ and outputs a bound $\hat{C} \ge \osc_{K_\alpha}(r)$ valid for all $L$-Lipschitz $r$ must satisfy
\[
  \sup_{r \,:\, L\text{-Lip}} \bigl(\hat{C} - \osc_{K_\alpha}(r)\bigr) \;\ge\; c \, L \, R_\alpha \, n^{-1/D},
\]
for a dimension-dependent constant $c > 0$.
\end{proposition}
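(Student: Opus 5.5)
The plan is a standard adversary (fooling-function) argument from information-based complexity. Fix any algorithm, deterministic or adaptive, and run it on the zero function $r_0 \equiv 0$. Its queries are then determined: points $z_1,\ldots,z_n \in K_\alpha$. If the algorithm is adaptive, the queries depend only on the values observed, and those values are all zero along this run. It outputs some $\hat C_0$. Since $\osc_{K_\alpha}(r_0)=0$, validity gives $\hat C_0 \ge 0$. Now construct a second $L$-Lipschitz function $r_1$ that vanishes at every $z_i$ but has large oscillation. The algorithm sees identical data on $r_1$, so it makes the same queries and outputs the same $\hat C_0$. Validity on $r_1$ forces $\hat C_0 \ge \osc_{K_\alpha}(r_1)$. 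On $r_0$, therefore, the excess is $\hat C_0 - 0 \ge \osc_{K_\alpha}(r_1)$, and it only remains to make $\osc_{K_\alpha}(r_1)$ of order $L R_\alpha n^{-1/D}$.

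The key step is a volume (packing) argument showing that $n$ points cannot be $\delta$-dense in the ball $K_\alpha$ when $\delta \asymp R_\alpha n^{-1/D}$. Let $\delta = c_D R_\alpha n^{-1/D}$ with $c_D < 1$ small, and consider the shrunken ball $K' = \{\|z\| \le R_\alpha - \delta\}$. If every point of $K'$ were within distance $\delta$ of some $z_i$, then the balls $B(z_i,\delta)$ would cover $K'$. That would give
\[
n\,\delta^D \ge (R_\alpha-\delta)^D \ge (R_\alpha/2)^D
\]
whenever $\delta \le R_\alpha/2$. Choosing $c_D < 1/2$ contradicts this, using $n \ge 1$ to ensure $\delta \le R_\alpha/2$. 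Hence there is a point $z^* \in K'$ with $\min_i \|z^* - z_i\| \ge \delta$. Because $z^* \in K'$, the ball $B(z^*,\delta)$ lies inside $K_\alpha$.

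Next, define the bump $r_1(z) = L\max(0, \delta - \|z - z^*\|)$. This function is $L$-Lipschitz, since it is a maximum of $L$-Lipschitz functions. It vanishes at every $z_i$ because each $z_i$ is at distance at least $\delta$ from $z^*$. It attains $L\delta$ at $z^*$ and $0$ elsewhere in $K_\alpha$. So $\osc_{K_\alpha}(r_1) = L\delta = c_D L R_\alpha n^{-1/D}$, which gives
\[
\sup_r \bigl(\hat C - \osc(r)\bigr) \ge \hat C_0 - 0 \ge c\,L R_\alpha n^{-1/D}
\]
with $c = c_D$, for instance $c_D = 1/4$.

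Randomised algorithms can be handled in two ways. One is to interpret the statement per realisation, requiring validity with certainty. The other is to apply Yao's principle to a random bump placement; I would state the deterministic and adaptive case and note the extension.

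The main obstacle is only bookkeeping for adaptivity: the queries must be fixed before the fooling function is built. This is resolved by building $r_1$ after observing the run on $r_0$. The factor $1/2$ in the volume argument, which ensures the bump sits inside $K_\alpha$, is what makes $c$ depend on $D$. In fact this simple choice gives a uniform $c = 1/4$. The dimension-dependence permitted in the statement leaves room for sharper constants.
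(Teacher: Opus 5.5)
Your proof is correct and is the paper's argument. Volume packing yields an unqueried point $z^*$ at distance $\gtrsim R_\alpha n^{-1/D}$ from all queries, and then the indistinguishable pair $r_0\equiv 0$ and the tent $r_1$ of height $L\delta$ forces excess $\ge L\delta$ on $r_0$. You also make explicit the adaptivity bookkeeping (queries fixed by the run on $r_0$) and the volume computation, both of which the paper leaves implicit; the shrunken ball $K'$ is harmless but unnecessary, since $\osc_{K_\alpha}(r_1)=L\delta$ needs only $z^*\in K_\alpha$.
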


\begin{proof}
By a volume-packing argument, for any $n$ points in $K_\alpha = B(0, R_\alpha) \subset \mathbb{R}^D$, there exists $z^* \in K_\alpha$ with $\|z^* - z_i\| \ge \delta_n$ for all $i$, where $\delta_n \ge c \, R_\alpha \, n^{-1/D}$ for a constant $c > 0$ depending only on $D$.
Consider the two $L$-Lipschitz functions $r_0(z) \equiv 0$ and $r_1(z) = L\delta_n \max(0, 1 - \|z - z^*\|/\delta_n)$.
Both satisfy $r_0(z_i) = r_1(z_i) = 0$ for all $i$ (since $\|z^* - z_i\| \ge \delta_n$), so any point-evaluation-based algorithm cannot distinguish them.
Yet $\osc_{K_\alpha}(r_0) = 0$ while $\osc_{K_\alpha}(r_1) = L\delta_n$.
Any certificate valid for both must satisfy $\hat{C} \ge L\delta_n \ge c \, L \, R_\alpha \, n^{-1/D}$.
\end{proof}

Proposition~\ref{prop:lower} formalises the covering barrier: no algorithm based on pointwise evaluations can certify the full-support oscillation of a Lipschitz function faster than $\Omega(n^{-1/D})$, regardless of sample placement or local refinement. This lower bound applies to worst-case full-support Lipschitz certification; it does not preclude mass-aware or distributional certification strategies. This motivates the quantile-core construction of Certificate~II, which sidesteps covering entirely by working with the one-dimensional residual distribution.

\section{Quantile-Core Certificate (Certificate~II)}\label{sec:quantile}

The key observation is empirical: for a well-trained flow, the full oscillation $\osc_{K_\alpha}(r)$ is dominated by rare boundary outliers, while $r(z)$ is highly concentrated for the bulk of the proposal mass.
At $D = 10$, the $99\%$ interquantile range of $r$ is only $11\%$ of the full oscillation (Table~\ref{tab:banana-quantiles}).
Certificate~I pays an exponentially growing price to control the remaining $1\%$; Certificate~II avoids this entirely.

\subsection{Data-defined residual core}

Certificate~I requires a space-filling (e.g., uniform) design over $K_\alpha$ because it must cover the entire region.
Certificate~II only needs the one-dimensional distribution of $r(Z)$ under the \emph{proposal}; we therefore draw certification samples from the proposal distribution itself:
\begin{equation}\label{eq:cert-sampling}
  z_1, \dots, z_n \;\overset{\mathrm{iid}}{\sim}\; \mu_K \;:=\; q_0(\cdot \mid K_\alpha).
\end{equation}
This ensures that the quantiles estimated below are quantiles of $r(Z)$ under the IMH proposal, and the resulting mass statements refer directly to proposal mass.

\begin{definition}[Data-defined residual core]\label{def:core}
Fix trimming level $\rho \in (0, \tfrac{1}{2})$ and confidence level $\zeta \in (0,1)$.
Assume $\rho > \varepsilon_n$, where
\begin{equation}\label{eq:dkw-eps}
  \varepsilon_n = \sqrt{\frac{\log(2/\zeta)}{2n}};
\end{equation}
equivalently, $n > \log(2/\zeta)/(2\rho^2)$.
(All reported experiments satisfy this; for small $n$ or small $\rho$, the quantile endpoints should be clipped to $[0,1]$.)

Let $\hat{Q}_p$ denote the empirical $p$-quantile of the residuals $\{r_i = r(z_i)\}_{i=1}^n$.
Define the \emph{data-defined residual core}
\begin{equation}\label{eq:core-data}
  \tilde{G}_\rho = \bigl\{z \in K_\alpha : \hat{Q}_{\rho - \varepsilon_n} \le r(z) \le \hat{Q}_{1-\rho+\varepsilon_n}\bigr\},
\end{equation}
and the \emph{certified core oscillation}
\begin{equation}\label{eq:chat}
  \hat{C}_\rho = \hat{Q}_{1-\rho+\varepsilon_n} - \hat{Q}_{\rho - \varepsilon_n}.
\end{equation}
\end{definition}

By construction, $\osc_{\tilde{G}_\rho}(r) \le \hat{C}_\rho$ holds deterministically, since $\tilde{G}_\rho$ is defined by the empirical quantile thresholds.

\subsection{Main results}

\begin{assumption}\label{asm:continuous}
The distribution of $r(Z)$ under $Z \sim \mu_K$ is continuous, i.e., its CDF $F_r$ has no jumps.
\end{assumption}

This is a mild regularity condition satisfied generically when $T_\phi$ is smooth and the posterior density is continuous. It excludes degenerate cases where $r(Z)$ has atoms; a generalised-quantile formulation can remove this assumption at the cost of additional notation.

\begin{theorem}[Finite-sample quantile certificate]\label{thm:quantile}
Under Assumption~\ref{asm:continuous}, assume $\rho > \varepsilon_n$ (i.e., $n > \log(2/\zeta)/(2\rho^2)$).
Let $q_p = \inf\{t : F_r(t) \ge p\}$ denote the population $p$-quantile and $G_\rho = \{z \in K_\alpha : q_\rho \le r(z) \le q_{1-\rho}\}$ the true residual core.
Then with probability at least $1-\zeta$ over the certification sample:
\begin{equation}\label{eq:quantile-sandwich}
  \hat{Q}_{\rho - \varepsilon_n} \;\le\; q_\rho \;\le\; q_{1-\rho} \;\le\; \hat{Q}_{1-\rho+\varepsilon_n},
\end{equation}
and consequently $G_\rho \subseteq \tilde{G}_\rho$ and $\mu_K(\tilde{G}_\rho) \ge 1 - 2\rho$.
\end{theorem}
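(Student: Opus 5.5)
We apply the Dvoretzky--Kiefer--Wolfowitz inequality to the empirical CDF of the i.i.d.\ residuals $r_1,\dots,r_n$. Since $z_1,\dots,z_n \sim \mu_K$ i.i.d.\ by~\eqref{eq:cert-sampling}, the residuals are i.i.d.\ with continuous CDF $F_r$ (Assumption~\ref{asm:continuous}). Let $\hat F_n(t) = n^{-1}\#\{i : r_i \le t\}$. Massart's tight form of DKW gives
\[
\Pr\Bigl(\sup_t |\hat F_n(t) - F_r(t)| > \varepsilon_n\Bigr) \le 2e^{-2n\varepsilon_n^2} = \zeta
\]
with $\varepsilon_n$ as in~\eqref{eq:dkw-eps}. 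All subsequent statements are derived deterministically on the event $E = \{\sup_t|\hat F_n - F_r| \le \varepsilon_n\}$, which has probability at least $1-\zeta$.

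\textbf{Outer inequalities.} On $E$, we translate the uniform CDF band into quantile inequalities, taking $\hat Q_p = \inf\{t : \hat F_n(t) \ge p\}$ to match the definition of $q_p$.

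For the lower endpoint, continuity gives $F_r(q_\rho) = \rho$, so $\hat F_n(q_\rho) \ge \rho - \varepsilon_n$. By the definition of the generalised inverse, $\hat Q_{\rho-\varepsilon_n} \le q_\rho$. Here $\rho > \varepsilon_n$ ensures the level lies in $(0,1)$.

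For the upper endpoint, take any $t < q_{1-\rho}$. Then $F_r(t) < 1-\rho$, hence $\hat F_n(t) < 1-\rho+\varepsilon_n$, so $t < \hat Q_{1-\rho+\varepsilon_n}$. Letting $t \uparrow q_{1-\rho}$ gives $q_{1-\rho} \le \hat Q_{1-\rho+\varepsilon_n}$. If $1-\rho+\varepsilon_n > 1$, we clip to the sample maximum. This is harmless under the assumption $\rho>\varepsilon_n$, which forces $1-\rho+\varepsilon_n<1$.

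\textbf{Middle inequality.} $q_\rho \le q_{1-\rho}$ holds because $\rho < 1/2$ and $p \mapsto q_p$ is monotone.

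\textbf{Consequences.} The set inclusion is immediate: if $z\in G_\rho$, then $\hat Q_{\rho-\varepsilon_n}\le q_\rho\le r(z)\le q_{1-\rho}\le \hat Q_{1-\rho+\varepsilon_n}$, so $z\in\tilde G_\rho$. For the mass bound, $\mu_K(\tilde G_\rho) \ge \mu_K(G_\rho) = F_r(q_{1-\rho}) - \Pr(r(Z)<q_\rho)$. By continuity this equals $(1-\rho) - \rho = 1-2\rho$. Continuity is used precisely here, so that there are no atoms at $q_\rho$ or $q_{1-\rho}$, and in the identity $F_r(q_\rho)=\rho$ above.

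\textbf{Main obstacle.} The delicate point is the careful handling of generalised inverses and strict versus non-strict inequalities when passing from the CDF band to quantiles, particularly for the upper endpoint. A second subtlety is that the conditioning measure in the mass statement is $\mu_K$ but $\tilde G_\rho$ is random. We avoid any need to control $\mu_K(\tilde G_\rho)$ directly by going through the fixed set $G_\rho$.
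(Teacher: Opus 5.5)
Your proof is correct and follows essentially the same route as the paper. You use the DKW--Massart event, get the lower endpoint from $\hat F_n(q_\rho)\ge\rho-\varepsilon_n$, get the upper endpoint from $F_r(t)<1-\rho$ for $t<q_{1-\rho}$, and obtain the mass bound by passing through the fixed set $G_\rho$ using continuity; the only differences are that you argue the upper endpoint directly by letting $t\uparrow q_{1-\rho}$ (the paper argues by contradiction at $t_0=\hat Q_{1-\rho+\varepsilon_n}$) and you state the middle inequality $q_\rho\le q_{1-\rho}$ explicitly, which the paper leaves implicit.
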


\begin{proof}
By the Dvoretzky--Kiefer--Wolfowitz inequality \citep{DKW1956,Massart1990}, with probability $\ge 1 - \zeta$,
\begin{equation}\label{eq:dkw-event}
  \sup_{t \in \R} |F_n(t) - F_r(t)| \le \varepsilon_n,
\end{equation}
where $F_n$ is the empirical CDF of $\{r_i\}$. All subsequent statements hold on this event.

\emph{Lower bound: $\hat{Q}_{\rho-\varepsilon_n} \le q_\rho$.}\;
By continuity of $F_r$ (Assumption~\ref{asm:continuous}), $F_r(q_\rho) = \rho$.
From~\eqref{eq:dkw-event}, $F_n(q_\rho) \ge F_r(q_\rho) - \varepsilon_n = \rho - \varepsilon_n$.
Since $\hat{Q}_{\rho-\varepsilon_n} = \inf\{t: F_n(t) \ge \rho - \varepsilon_n\}$ and $F_n(q_\rho) \ge \rho - \varepsilon_n$,
we have $\hat{Q}_{\rho-\varepsilon_n} \le q_\rho$.

\emph{Upper bound: $\hat{Q}_{1-\rho+\varepsilon_n} \ge q_{1-\rho}$.}\;
Suppose for contradiction that $\hat{Q}_{1-\rho+\varepsilon_n} < q_{1-\rho}$.
Set $t_0 = \hat{Q}_{1-\rho+\varepsilon_n}$.
By definition of the empirical quantile, $F_n(t_0) \ge 1-\rho+\varepsilon_n$.
From~\eqref{eq:dkw-event}, $F_r(t_0) \ge F_n(t_0) - \varepsilon_n \ge 1-\rho$.
But $t_0 < q_{1-\rho} = \inf\{t: F_r(t) \ge 1-\rho\}$ implies $F_r(t_0) < 1-\rho$, a contradiction.

\emph{Containment and mass.}\;
The sandwich~\eqref{eq:quantile-sandwich} gives $[q_\rho, q_{1-\rho}] \subseteq [\hat{Q}_{\rho-\varepsilon_n}, \hat{Q}_{1-\rho+\varepsilon_n}]$,
so $G_\rho \subseteq \tilde{G}_\rho$.
By continuity of $F_r$, $\mu_K(G_\rho) = F_r(q_{1-\rho}) - F_r(q_\rho) = 1 - 2\rho$, hence $\mu_K(\tilde{G}_\rho) \ge 1-2\rho$.
\end{proof}

\begin{theorem}[Core-restricted spectral gap]\label{thm:core-gap}
Let $\tilde{G}_\rho$ be the data-defined core from Definition~\ref{def:core}.
Consider the IMH kernel $P_{\tilde{G}}$ on $\tilde{G}_\rho$ with proposal $\mu_{\tilde{G}} = q_0(\cdot \mid \tilde{G}_\rho)$ and target $\pi_{\tilde{G}} = \pi_z(\cdot \mid \tilde{G}_\rho)$.
Then the spectral gap of $P_{\tilde{G}}$ in $L^2(\pi_{\tilde{G}})$ satisfies
\begin{equation}\label{eq:core-gap}
  \gamma(P_{\tilde{G}}) \;\ge\; \frac{2}{1 + e^{\hat{C}_\rho}}.
\end{equation}
\end{theorem}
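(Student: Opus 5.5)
The plan is to reduce the statement to the Mengersen--Tweedie bound~\eqref{eq:mt}, applied with $\tilde{G}_\rho$ in place of $K_\alpha$. The first step is to compute the importance weight of the core-restricted kernel and show that the unknown normalising constants cancel. Write $Z_q = q_0(\tilde{G}_\rho)$ and $Z_\pi = \pi_z(\tilde{G}_\rho)$; here $\pi_z$ may be unnormalised, which is harmless. For $z \in \tilde{G}_\rho$ the restricted weight is
\[
  \tilde{w}(z) = \frac{\pi_{\tilde{G}}(z)}{\mu_{\tilde{G}}(z)} = \frac{\pi_z(z)/Z_\pi}{q_0(z)/Z_q} = \frac{Z_q}{Z_\pi}\, e^{r(z)}.
\]
Hence $\log \tilde{w} = r + \mathrm{const}$ on $\tilde{G}_\rho$. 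The acceptance ratio $\tilde{w}(z')/\tilde{w}(z) = e^{r(z') - r(z)}$ is therefore exactly the one used by the sampler, and the oscillation of $\log\tilde{w}$ on $\tilde{G}_\rho$ equals $\osc_{\tilde{G}_\rho}(r)$. We need $Z_q > 0$ and $Z_\pi > 0$ so that the conditioned measures are defined. If $\tilde{G}_\rho$ is nonempty (which holds since it contains the sample points with residuals between the empirical quantiles), positivity of $q_0$ and $\pi_z$ on it gives $Z_q > 0$ and $Z_\pi > 0$. Alternatively, on the DKW event Theorem~\ref{thm:quantile} gives $\mu_K(\tilde{G}_\rho) \ge 1-2\rho > 0$.

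The second step is the deterministic bound on the oscillation. By Definition~\ref{def:core}, every $z \in \tilde{G}_\rho$ satisfies $\hat{Q}_{\rho-\varepsilon_n} \le r(z) \le \hat{Q}_{1-\rho+\varepsilon_n}$. This gives $\osc_{\tilde{G}_\rho}(r) \le \hat{C}_\rho$ with no probabilistic qualification. Consequently
\[
  \sup_{\tilde{G}_\rho} \tilde{w} \big/ \inf_{\tilde{G}_\rho} \tilde{w} \le e^{\hat{C}_\rho}.
\]

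The third step is to invoke Mengersen--Tweedie. For an IMH kernel with proposal $\mu$ and target $\pi$, the density ratio $w = d\pi/d\mu$ satisfies $w \le \sup w$ and $\int w\, d\mu = 1$, so $\inf w \le 1 \le \sup w$. The cited bound gives gap $\ge 2/(1+\sup w/\inf w)$ when stated in ratio form, exactly as used in~\eqref{eq:mt}. Applying it on the state space $\tilde{G}_\rho$ with $\sup\tilde{w}/\inf\tilde{w} \le e^{\hat{C}_\rho}$ yields~\eqref{eq:core-gap}, since $x \mapsto 2/(1+x)$ is decreasing.

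The main obstacle is making sure the form of Mengersen--Tweedie used in~\eqref{eq:mt} really depends only on the ratio $\sup w/\inf w$, i.e.\ on the oscillation of $\log w$. The classical statement is $\gamma \ge 1/\sup w$, which depends on the normalisation. If only that form is available, I would argue via the standard spectral characterisation of IMH kernels, for which the gap is controlled by $\inf_z \min(1, \cdot)$-type acceptance bounds. Concretely, I would lower-bound the Dirichlet form by
\[
  \tfrac12 \iint (f(z)-f(z'))^2 \min\bigl(\pi(z)\mu(z'), \pi(z')\mu(z)\bigr)\,dz\,dz',
\]
and compare it to the variance using $\min(a,b) \ge ab\,(\ldots)$ with the ratio bound. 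If the paper's form~\eqref{eq:mt} is taken as given, this step is immediate.

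A second point to note is that $\tilde{G}_\rho$ is data-dependent. Conditional on the certification sample, however, it is a fixed measurable set, so the kernel-level argument applies pathwise. No probability-$1-\zeta$ qualifier is needed for~\eqref{eq:core-gap}; that qualifier enters only through the mass statement of Theorem~\ref{thm:quantile}.
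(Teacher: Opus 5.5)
Your three steps coincide with the paper's proof. The paper likewise computes $w_{\tilde G}=e^{r}\,q_0(\tilde G_\rho)/\pi_z(\tilde G_\rho)$, observes that $\osc_{\tilde G_\rho}(\log w_{\tilde G})=\osc_{\tilde G_\rho}(r)\le\hat C_\rho$, and then simply cites \eqref{eq:mt}. The genuine gap is the step you flagged yourself, and it cannot be closed, because the ratio form $\gamma\ge 2/(1+\sup w/\inf w)$ is false.

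Normalise $w$ so that $\int w\,d\mu=1$. The Mengersen--Tweedie minorisation $P(z,\cdot)\ge\mu(\cdot)\min\{1,w(\cdot)/w(z)\}\ge\pi(\cdot)/\sup w$ gives $\gamma\ge 1/\sup w$. Your facts $\inf w\le 1\le\sup w$ then yield only $\gamma\ge\inf w/\sup w=e^{-C}$. This is strictly below $2/(1+e^{C})$ for every $C>0$, and no better function of $C$ alone exists.

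For a concrete failure, take two states with $\mu=(0.99,0.01)$ and $\pi=(0.9,0.1)$, so $w=(10/11,\,10)$ and $C=\log 11$. The chain moves from the first state to the second with probability $0.01$ and back with probability $0.99\cdot\tfrac{1}{11}=0.09$. Its eigenvalues are $1$ and $0.9$, so $\gamma=0.1<1/6=2/(1+e^{C})$. Letting the mass of the high-weight state tend to $0$ with $\sup w$ fixed gives $\gamma e^{C}\to 1$, so $e^{-C}$ is sharp.

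The same failure occurs on a genuine core with a smooth residual. Suppose $r(Z)\sim\mathrm{Exp}(3)$ under $\mu_K$, which is realisable in $D=1$ with $T_\phi$ the identity. As $n\to\infty$ with $\rho=0.01$, the core is $\{a\le r\le b\}$ with $a=-\tfrac13\log 0.99$ and $b=\tfrac13\log 100$. The gap of $P_{\tilde G}$ is exactly $1/\sup w_{\tilde G}=E_{\mu_{\tilde G}}[e^{r}]\,e^{-b}\approx 0.31$: the minorisation gives $\ge$, and testing $\mathbf 1_{\{w_{\tilde G}>\sup w_{\tilde G}-\delta\}}$ gives $\le$. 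By contrast, $2/(1+e^{\hat C_\rho})\approx 0.355$.

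So your Dirichlet-form fallback cannot rescue the step either. The natural bound $\min\{\pi(z)\mu(z'),\pi(z')\mu(z)\}\ge\pi(z)\pi(z')/\sup w$ just reproduces $\gamma\ge 1/\sup w$, and no sharper comparison can prove an inequality that fails. The paper's proof inherits exactly this defect from \eqref{eq:mt}, which is not what Mengersen and Tweedie prove.

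What your first two steps do establish, once \eqref{eq:mt} is replaced by the correct $\gamma\ge e^{-\osc(\log w)}$, is $\gamma(P_{\tilde G})\ge e^{-\hat C_\rho}$. You are right that this holds deterministically given the certification sample.

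A minor point: nonemptiness of $\tilde G_\rho$ does not by itself give $q_0(\tilde G_\rho)>0$, since a level set of $r$ can be null. Instead, use continuity of $r$ when $\hat C_\rho>0$, or the mass bound $\mu_K(\tilde G_\rho)\ge 1-2\rho$ on the DKW event.
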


\begin{proof}
The importance weight on $\tilde{G}_\rho$ is
\[
  w_{\tilde{G}}(z) = \frac{\pi_{\tilde{G}}(z)}{\mu_{\tilde{G}}(z)}
  = \frac{\pi_z(z)/\pi_z(\tilde{G}_\rho)}{q_0(z)/q_0(\tilde{G}_\rho)}
  = e^{r(z)} \cdot \frac{q_0(\tilde{G}_\rho)}{\pi_z(\tilde{G}_\rho)}.
\]
The second factor is constant in $z$, so $\osc_{\tilde{G}_\rho}(\log w_{\tilde{G}}) = \osc_{\tilde{G}_\rho}(r) \le \hat{C}_\rho$.
Apply the Mengersen--Tweedie bound~\eqref{eq:mt}.
The kernel certified here is the core-restricted IMH kernel on $\tilde{G}_\rho$, not the full $K_\alpha$-restricted chain; see Remark~\ref{rem:scope} for further discussion.
\end{proof}

\begin{corollary}[Certified core spectral gap from data]\label{cor:certified}
Under the conditions of Theorems~\ref{thm:quantile} and~\ref{thm:core-gap}, with probability at least $1-\zeta$:
\begin{enumerate}[nosep]
\item The data-defined core $\tilde{G}_\rho$ contains at least $1-2\rho$ of the proposal mass.
\item The IMH kernel restricted to $\tilde{G}_\rho$ has spectral gap $\gamma(P_{\tilde{G}}) \ge 2/(1 + e^{\hat{C}_\rho})$.
\item The finite-sample probability slack entering the quantile certificate is $\varepsilon_n = O(n^{-1/2})$, independent of the ambient dimension~$D$.
\end{enumerate}
\end{corollary}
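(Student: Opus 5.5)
The corollary is a packaging of Theorems~\ref{thm:quantile} and~\ref{thm:core-gap}, so the plan is to fix a single good event and read off the three items on it. Let $E$ be the Dvoretzky--Kiefer--Wolfowitz event~\eqref{eq:dkw-event}, i.e.\ $\sup_t |F_n(t)-F_r(t)|\le\varepsilon_n$. By the DKW inequality with Massart's constant, $E$ has probability at least $1-\zeta$ over the i.i.d.\ certification sample~\eqref{eq:cert-sampling}. All three claims will be shown to hold simultaneously on $E$, so no union bound is needed and the overall confidence remains $1-\zeta$.

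For item~1, I would invoke Theorem~\ref{thm:quantile}. Its proof shows that on $E$ (not merely with some separate probability) the sandwich~\eqref{eq:quantile-sandwich} holds. The sandwich gives $G_\rho\subseteq\tilde G_\rho$. Continuity of $F_r$ (Assumption~\ref{asm:continuous}) gives $\mu_K(G_\rho)=1-2\rho$, and hence $\mu_K(\tilde G_\rho)\ge 1-2\rho$. Here ``proposal mass'' means $\mu_K$-mass, which matches the sampling scheme.

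For item~2, Theorem~\ref{thm:core-gap} is deterministic: for any realisation of the sample, the constant-factor cancellation in $w_{\tilde G}$ gives $\osc_{\tilde G_\rho}(r)\le\hat C_\rho$, and the Mengersen--Tweedie bound~\eqref{eq:mt} then applies. The only point to check is that the restricted kernel is well defined, which requires $q_0(\tilde G_\rho)>0$ and $\pi_z(\tilde G_\rho)>0$. On $E$, item~1 gives $q_0(\tilde G_\rho)\ge(1-\alpha)(1-2\rho)>0$ since $\rho<\tfrac12$. Positivity of $\pi_z$ on $\tilde G_\rho$ follows because $r$, and therefore $\pi_z=e^{r}q_0$, is finite there, as $r$ is bounded between the empirical quantiles. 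This well-definedness check is the only place I expect any care to be needed. Because item~2 holds on every sample realisation, it holds in particular on $E$.

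Item~3 is immediate from~\eqref{eq:dkw-eps}. We have $\varepsilon_n=\sqrt{\log(2/\zeta)/(2n)}$, which is $O(n^{-1/2})$ for fixed $\zeta$. It involves neither $D$ nor $R_\alpha$, because the DKW inequality holds for any real-valued i.i.d.\ sample $r(z_i)$, whatever the dimension of $z$.
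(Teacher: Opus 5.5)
Your packaging matches the paper's implicit argument. The corollary is Theorem~\ref{thm:quantile} read off on the DKW event~\eqref{eq:dkw-event}, Theorem~\ref{thm:core-gap} used as a deterministic statement, and the explicit formula~\eqref{eq:dkw-eps}. Items~1 and~3 are fine as you give them. Your well-definedness check ($q_0(\tilde G_\rho)\ge(1-\alpha)(1-2\rho)>0$, and $0<\pi_z(\tilde G_\rho)<\infty$ because $r$ is bounded on $\tilde G_\rho$) is a useful addition the paper leaves implicit.

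There is, however, a genuine problem in item~2, which you inherit from the paper rather than introduce. The step ``the Mengersen--Tweedie bound~\eqref{eq:mt} then applies'' fails, because~\eqref{eq:mt} is not a correct form of that theorem. Take an IMH kernel with proposal $\mu$, target $\pi$ and weight $w=d\pi/d\mu$ normalised so that $\int w\,d\mu=1$. Its Dirichlet form is $\tfrac12\iint(f(x)-f(y))^2\min\{w(x),w(y)\}\,\mu(dx)\mu(dy)$, while $\mathrm{Var}_\pi(f)$ has kernel $w(x)w(y)$. Since $\min\{w(x),w(y)\}=w(x)w(y)/\max\{w(x),w(y)\}$, this gives $\gamma\ge 1/\sup w$. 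Testing $f=\mathbf{1}_A$ with $A=\{w>\sup w-\delta\}$ and letting $\delta\to 0$ gives $\gamma\le 1/\sup w$, so $\gamma=1/\sup w$ exactly. Oscillation control only yields $\sup w\le e^{C}\inf w\le e^{C}$, hence $\gamma\ge e^{-C}$. This is essentially attained when most proposal mass sits at the bottom of the weight range.

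Concretely, take two states with $\mu=(0.1,0.9)$ and $\pi=(0.25,0.75)$. Then $w=(2.5,5/6)$ and $C=\log 3$. The transition matrix has rows $(0.7,0.3)$ and $(0.1,0.9)$, so its second eigenvalue is $0.6$ and $\gamma=0.4<0.5=2/(1+e^{C})$. A smoothed version of this weight profile, realised as the residual profile on a core in $\R^D$, violates~\eqref{eq:core-gap} in the same way. Since $e^{-C}<2/(1+e^{C})$ for every $C>0$, item~2 (and Theorem~\ref{thm:core-gap}) is false as stated.

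The repair is to replace $2/(1+e^{\hat C_\rho})$ by $e^{-\hat C_\rho}$. This uses $\osc_{\tilde G_\rho}(\log w_{\tilde G})\le\hat C_\rho$ together with $\inf w_{\tilde G}\le 1$, which holds because the normalised weight has $\mu_{\tilde G}$-mean one. With that substitution your single-event argument goes through verbatim.
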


\begin{remark}[Dimension-free rate]\label{rem:dim-free}
The certificate $\hat{C}_\rho$ depends on $D$ only through the population quantiles $q_\rho, q_{1-\rho}$ of $r(Z)$.
Its finite-sample probability slack is governed by $\varepsilon_n = O(n^{-1/2})$, which is independent of $D$.
In contrast, Certificate~I's correction scales as $O(n^{-1/D})$.
For $D = 20$ and $n = 200{,}000$: Certificate~I has $\varepsilon^* \approx 4.2$; Certificate~II has $\varepsilon_n \approx 0.003$.
Converting this probability slack into a quantile-value rate would require standard local density conditions near $q_\rho$ and $q_{1-\rho}$.
\end{remark}

\begin{remark}[Scope of the certificate]\label{rem:scope}
Theorem~\ref{thm:core-gap} certifies the spectral gap of the IMH chain \emph{restricted to $\tilde{G}_\rho$}, not the full $K_\alpha$-restricted chain.
The core $\tilde{G}_\rho$ captures at least $(1-2\rho)$ of the proposal mass and, when $\hat{C}_\rho$ is small, a comparable fraction of the target mass (Section~\ref{sec:target-mass}).
In Section~\ref{sec:ess-calibration} we demonstrate empirically that the core certificate tracks actual mixing efficiency.
\end{remark}

\subsection{Empirical target-mass diagnostic}\label{sec:target-mass}

The proposal-mass guarantee $\mu_K(\tilde{G}_\rho) \ge 1-2\rho$ is rigorous.
A corresponding target-mass statement requires controlling tail importance weights and is left as a formal guarantee for future work.
We report the empirical diagnostic
\begin{equation}\label{eq:pi-hat}
  \hat{\pi}(\tilde{G}_\rho) = \frac{\sum_{i=1}^n e^{r_i} \mathbf{1}_{\tilde{G}_\rho}(z_i)}{\sum_{i=1}^n e^{r_i}},
\end{equation}
which estimates $\pi_z(\tilde{G}_\rho \mid K_\alpha)$ via self-normalised importance sampling.
In well-matched cases (banana $D \le 10$, engineering targets), $\hat{\pi}(\tilde{G}_\rho)$ is close to $\mu_K(\tilde{G}_\rho)$ (Table~\ref{tab:target-mass}); in harder cases, the discrepancy serves as an empirical warning of residual transport mismatch.

\subsection{Dual-certificate framework}\label{sec:dual}

Certificates~I and~II serve complementary roles:

\begin{center}
\begin{tabular}{lcc}
\toprule
& \textbf{Certificate~I (Covering)} & \textbf{Certificate~II (Quantile Core)} \\
\midrule
Sampling design & Uniform on $K_\alpha$ & Proposal $q_0(\cdot \mid K_\alpha)$ \\
Region & Full $K_\alpha$ & Residual core $\tilde{G}_\rho$ \\
Controls & Worst-case oscillation & Core $(1-2\rho)$-mass oscillation \\
Finite-sample slack & $O(n^{-1/D})$ & $O(n^{-1/2})$ \\
Rigorous for & $D \lesssim 5$ & All $D$ (on the core) \\
\bottomrule
\end{tabular}
\end{center}

\noindent When both are available, Certificate~I provides the stronger (full-support) guarantee; Certificate~II extends certification to dimensions where Certificate~I is vacuous.

\begin{figure}[H]
\centering
\includegraphics[width=\textwidth]{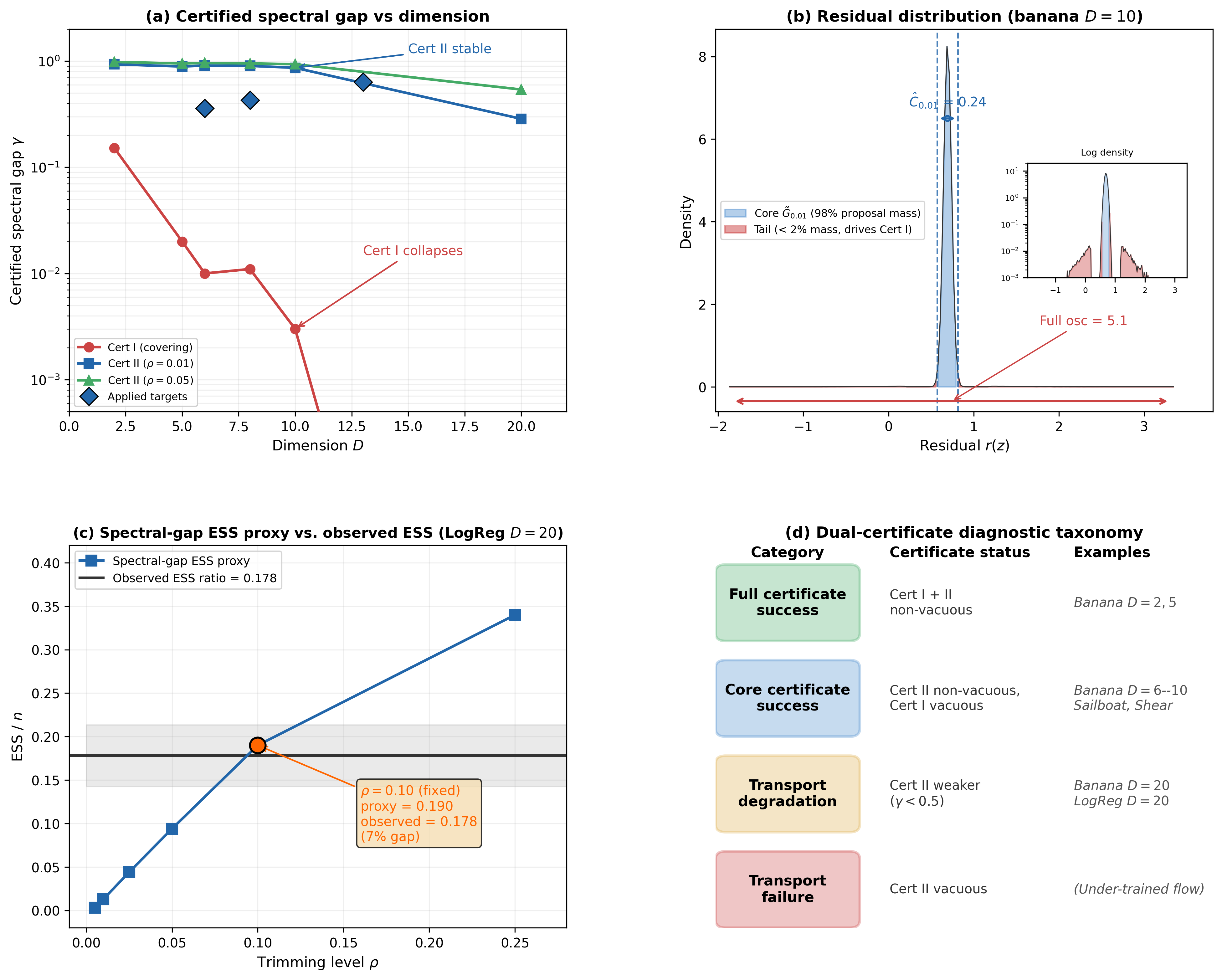}
\caption{Overview of the dual-certificate framework.
(a)~Certified spectral gap versus dimension on the banana target: Certificate~I (covering) collapses for $D \gtrsim 6$, whereas Certificate~II (quantile core) remains stable.
(b)~Residual distribution at $D = 10$, showing the concentrated core $\tilde{G}_{0.01}$ (capturing $98\%$ of the proposal mass) and the rare boundary tails that drive Certificate~I.
(c)~The spectral-gap ESS proxy from Certificate~II tracks the observed ESS ratio on Bayesian logistic regression ($D = 20$), matching within $7\%$ at $\rho = 0.10$.
(d)~The four-level diagnostic taxonomy induced by the two certificates.
Diamonds in~(a) denote additional applied and statistical targets (sailboat $D = 6$, shear-building benchmark $D = 8$, Heart Disease $D = 13$) evaluated using Certificate~II at $\rho = 0.01$. Panel~(c) is a calibration diagnostic at a pre-specified trimming level; it is not claimed as a universal ESS prediction.}
\label{fig:hero}
\end{figure}

\section{Relating Core Certificates to Mixing Efficiency}\label{sec:ess-theory}

For an independence MH kernel with spectral gap $\gamma$, the asymptotic variance of ergodic averages is bounded by $(2-\gamma)/\gamma$ times the target variance.
This motivates the \emph{spectral-gap ESS proxy}:
\begin{equation}\label{eq:ess-proxy}
  \widehat{\mathrm{ESS}}_\gamma / n \;=\; \frac{\gamma}{2 - \gamma},
\end{equation}
where $\gamma = \gamma(P_{\tilde{G}})$ from Theorem~\ref{thm:core-gap}.
The proxy~\eqref{eq:ess-proxy} is a conservative lower-bound-scale estimate of mixing efficiency, not a point prediction: since $\gamma(P_{\tilde{G}})$ certifies only the core-restricted kernel, the actual chain (which also mixes on the tails) may perform better.

The trimming level $\rho$ controls a bias--conservatism trade-off.
Small $\rho$ (e.g., $0.01$) certifies a wider core but includes more residual outliers, yielding a more conservative $\hat{C}_\rho$;
large $\rho$ (e.g., $0.25$) certifies a narrow core with a tighter $\hat{C}_\rho$ but covers less mass.
In Section~\ref{sec:ess-calibration}, we compare $\widehat{\mathrm{ESS}}_\gamma$ at fixed $\rho$ values with observed ESS from actual IMH chains, and identify the range of $\rho$ where the proxy tracks empirical mixing efficiency.

\section{Experiments}\label{sec:experiments}

All flows use the spectral-normalised RealNVP architecture trained with the CerT-OG-Anneal objective (NLL + smoothed oscillation penalty + gradient penalty with linear warm-up).
Certificate~I uses uniform certification samples on $K_\alpha$; Certificate~II uses proposal samples $\mu_K = q_0(\cdot \mid K_\alpha)$. Throughout, $\alpha = 0.05$.
Certificate~I values reported below are conservative diagnostic bounds under the empirical gradient estimate; their role is primarily to expose the covering barrier rather than to serve as production certificates.
Confidence level $\zeta = 0.05$ throughout.

\subsection{Banana target, $D = 2$--$20$}\label{sec:banana}

The $D$-dimensional banana target has potential $U(\theta) = \theta_1^2/(2\sigma_1^2) + (\theta_2 - b\theta_1^2)^2/(2\sigma_2^2) + \sum_{i \ge 3}\theta_i^2/2$, with $\sigma_1 = 2$, $\sigma_2 = 1$, $b = 0.1$.
This is a standard benchmark with nonlinear correlation structure in the first two dimensions and Gaussian marginals in the remainder.

\begin{table}[t]
\centering
\caption{Banana target: Certificate~I (covering) vs Certificate~II (quantile core) with $n = 200{,}000$ certification samples and $\rho = 0.01$.}
\label{tab:banana-quantiles}
\begin{tabular}{rrrrrrrr}
\toprule
$D$ & $\mathrm{Var}(r)$ & Full osc & $\hat{C}_{0.01}$ & $\gamma_{\mathrm{I}}$ & $\gamma_{\mathrm{II},\,0.01}$ & $\gamma_{\mathrm{II},\,0.05}$ & Ratio \\
\midrule
2  & 0.0004 & 2.27 & 0.13 & 0.152 & 0.933 & 0.982 & 0.05 \\
5  & 0.0013 & 2.91 & 0.22 & 0.020 & 0.890 & 0.950 & 0.07 \\
6  & 0.0011 & 2.50 & 0.19 & 0.010 & 0.908 & 0.961 & 0.06 \\
8  & 0.0012 & 2.11 & 0.20 & 0.011 & 0.902 & 0.952 & 0.08 \\
10 & 0.0022 & 2.27 & 0.27 & 0.003 & 0.865 & 0.935 & 0.11 \\
20 & 0.1226 & 54.65 & 1.79 & $\approx 0$ & 0.286 & 0.540 & 0.03 \\
\bottomrule
\end{tabular}

\smallskip
\begin{minipage}{\linewidth}
\small Ratio $= \hat{C}_{0.01} / \text{Full osc}$. Certificate~I is vacuous ($\gamma_{\mathrm{I}} \approx 0$) for $D \ge 20$; Certificate~II maintains $\gamma > 0.85$ through $D = 10$ and remains non-vacuous at $D = 20$.
\end{minipage}
\end{table}

\begin{figure}[H]
\centering
\includegraphics[width=0.95\textwidth]{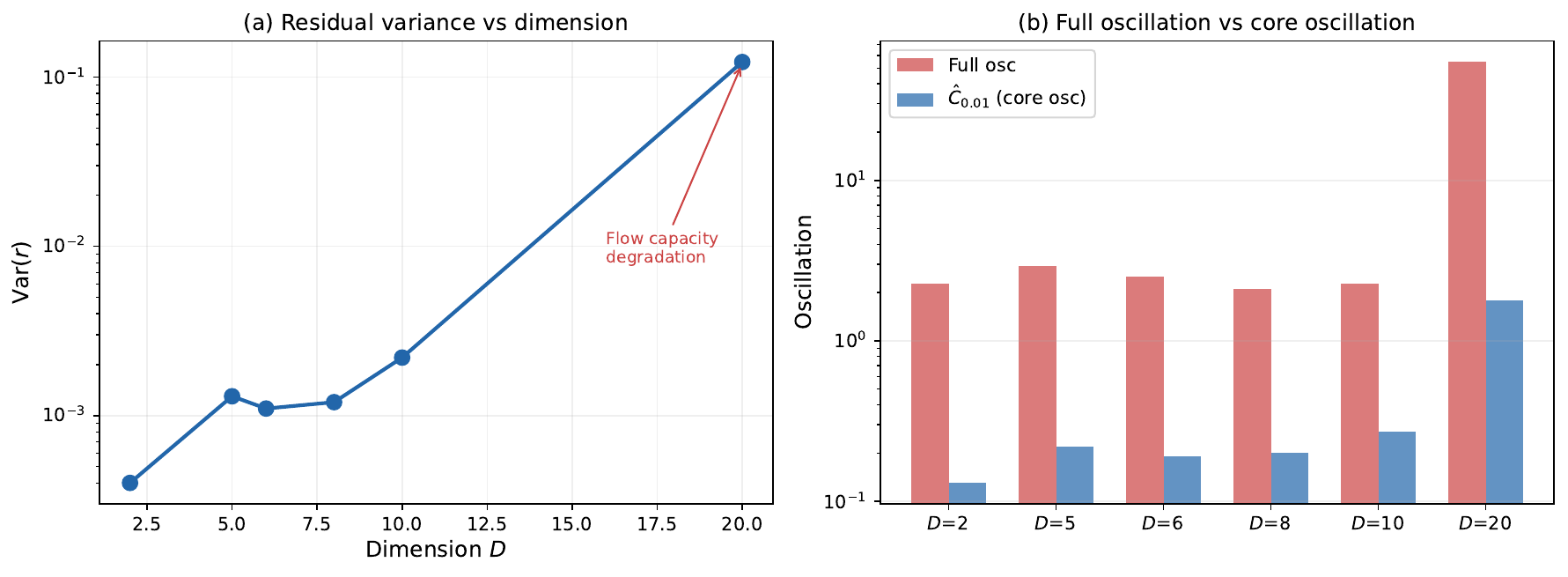}
\caption{Residual statistics versus dimension on the banana target ($n = 200{,}000$ certification samples). (a) The residual variance $\mathrm{Var}(r)$ remains below $0.003$ for $D = 2$--$10$, then increases sharply at $D = 20$ due to flow capacity degradation. (b) The full oscillation (driven by rare boundary outliers) and the core oscillation $\hat{C}_{0.01}$ (controlled by quantile estimation) diverge dramatically: at $D = 20$, the full oscillation exceeds $50$ while $\hat{C}_{0.01} = 1.79$.}
\label{fig:variance}
\end{figure}

The core oscillation $\hat{C}_{0.01}$ ranges from 0.13 ($D=2$) to 0.27 ($D=10$)---nearly constant across an order-of-magnitude increase in dimension (Table~\ref{tab:banana-quantiles}).
The variance of $r(Z)$ is similarly stable ($\approx 0.001$--$0.002$).
This near-dimension-free behaviour reflects the quality of the learned transport: the flow matches the target well in the high-probability bulk, with oscillation driven by rare boundary outliers.

At $D = 20$, the variance increases to 0.12 and $\hat{C}_{0.01}$ rises to 1.79, reflecting genuine flow capacity degradation.
Certificate~II correctly identifies this degradation while still providing a non-vacuous bound ($\gamma = 0.286$), in contrast to Certificate~I which is completely vacuous ($C_{\mathrm{I}} = 96.2$).

\subsection{Bayesian logistic regression ($D = 20$)}\label{sec:logreg}

To demonstrate Certificate~II on a mainstream statistical model, we consider Bayesian logistic regression with $D = 20$ covariates. The data-generating mechanism uses a standardised Gaussian design matrix $X \in \mathbb{R}^{500 \times 20}$ with a sparse true coefficient vector ($5$ of $20$ components active), $n_{\mathrm{obs}} = 500$ observations, and prior $\beta \sim \mathcal{N}(0, 4I)$. The posterior is log-concave but not Gaussian; at $D = 20$ it exhibits moderate correlations induced by the likelihood, making it a nontrivial test for the flow while remaining a standard benchmark familiar to the computational statistics community. We run a $5000$-sample independence MH chain to obtain the empirical acceptance rate ($0.61$) and batch-means ESS ($888/5000$, ratio $0.178$).
This model is a controlled statistical benchmark rather than a synthetic toy: the posterior is log-concave but anisotropic, with correlations induced by the likelihood that prevent simple Gaussian approximation. Flow-based proposals are meaningful here because the posterior concentrates on a curved manifold that standard random-walk proposals traverse slowly. The ESS calibration in Section~\ref{sec:ess-calibration} demonstrates that Certificate~II's spectral-gap proxy provides operationally useful information about mixing efficiency on this target.
Unlike the banana target, which has independent Gaussian marginals beyond the first two dimensions, the logistic regression posterior has globally coupled coordinates through the design matrix; this makes it a more representative test of Certificate~II on posteriors encountered in standard Bayesian workflows.
We do not use this example to claim real-data universality; rather, it serves as a controlled statistical benchmark for calibrating the certificate against empirical mixing efficiency.

\begin{table}[t]
\centering
\caption{Bayesian logistic regression ($D=20$): Certificate~II and ESS calibration with $n = 200{,}000$ certification samples. The actual ESS ratio is a target-level quantity independent of $\rho$; it is shown at $\rho = 0.10$ for comparison.}
\label{tab:logreg}
\begin{tabular}{rrrrr}
\toprule
$\rho$ & $\hat{C}_\rho$ & $\gamma_{\mathrm{II}}$ & ESS proxy ratio & Actual ESS ratio \\
\midrule
0.005 & 5.74 & 0.006 & 0.003 & --- \\
0.01  & 4.34 & 0.026 & 0.013 & --- \\
0.025 & 3.13 & 0.084 & 0.044 & --- \\
0.05  & 2.37 & 0.172 & 0.094 & --- \\
\textbf{0.10}  & \textbf{1.66} & \textbf{0.319} & \textbf{0.190} & \textbf{0.178} \\
0.25  & 0.78 & 0.631 & 0.340 & --- \\
\midrule
\multicolumn{4}{l}{Empirical acceptance rate} & 0.610 \\
\multicolumn{4}{l}{Empirical ESS (batch means, $n_{\mathrm{MH}} = 5000$)} & 888 / 5000 \\
\bottomrule
\end{tabular}
\end{table}

\subsection{Real-data logistic regression: Heart Disease ($D = 13$)}\label{sec:heart}

To demonstrate Certificate~II on real data, we apply CerT-MCMC to Bayesian logistic regression on the UCI Heart Disease data set \citep{Detrano1989} ($n = 270$ observations, $D = 13$ standardised clinical features, prior $\beta \sim \mathcal{N}(0, 25I)$).

The flow achieves $\mathrm{std}(r) = 0.127$ and an IMH acceptance rate of $94\%$, indicating excellent transport quality. Table~\ref{tab:heart} reports the quantile-core certificate. At $\rho = 0.01$, Certificate~II yields $\gamma_{\mathrm{II}} = 0.64$---substantially stronger than the engineering posteriors and the synthetic $D = 20$ logistic regression. Certificate~I remains practically weak ($\gamma_{\mathrm{I}} = 0.002$), confirming that the covering barrier persists on real data even when the flow has learned the posterior well. The target-mass diagnostic shows $\hat{\pi}(\tilde{G}_{0.01}) = 0.984 \approx \mu_K(\tilde{G}_{0.01})$, indicating nearly uniform importance weights on the core.

This result demonstrates that CerT-MCMC provides strong, non-vacuous convergence certificates on real-data Bayesian posteriors when the dimension is moderate and the flow has sufficient capacity.
This real-data example is not intended as a high-dimensional stress test; rather, it demonstrates that when a learned transport fits a moderate-dimensional clinical posterior well, CerT-MCMC can certify that fit with a strong core spectral-gap bound.

\begin{table}[H]
\centering
\caption{Heart Disease ($D = 13$, $n = 270$, real UCI data): Certificate~II with $n_{\mathrm{cert}} = 200{,}000$.}
\label{tab:heart}
\begin{tabular}{rrrrr}
\toprule
$\rho$ & $\hat{C}_\rho$ & $\gamma_{\mathrm{II}}$ & $\mu_K(\tilde{G}_\rho)$ & $\hat{\pi}(\tilde{G}_\rho)$ \\
\midrule
0.005 & 1.03 & 0.526 & 0.996 & 0.995 \\
0.01  & 0.76 & 0.636 & 0.986 & 0.984 \\
0.025 & 0.54 & 0.735 & 0.956 & 0.953 \\
0.05  & 0.41 & 0.798 & 0.906 & 0.902 \\
0.10  & 0.29 & 0.857 & 0.806 & 0.801 \\
0.25  & 0.13 & 0.933 & 0.506 & 0.502 \\
\bottomrule
\end{tabular}

\smallskip
\begin{minipage}{\linewidth}
\small Certificate~I: $\gamma_{\mathrm{I}} = 0.002$ (practically weak). IMH acceptance rate: $0.937$.
\end{minipage}
\end{table}

\subsection{ESS calibration}\label{sec:ess-calibration}

Table~\ref{tab:logreg} compares the spectral-gap ESS proxy from Certificate~II (equation~\ref{eq:ess-proxy}) at fixed $\rho$ values with the actual ESS ratio from a $5000$-sample independence MH chain.
At the pre-specified illustrative trimming level $\rho = 0.10$, the proxy ($0.190$) tracks the actual ratio ($0.178$) within $7\%$.
Smaller $\rho$ yields a more conservative proxy (the wider core includes more residual outliers, inflating $\hat{C}_\rho$); larger $\rho$ gives a less conservative proxy over a narrower core.

This comparison provides two practical insights.
First, the core certificate is not merely a theoretical bound: the spectral-gap proxy tracks empirical mixing efficiency at the correct order of magnitude.
Second, the range of $\rho$ where the proxy matches observation ($\rho \approx 0.05$--$0.10$) reveals that approximately $10$--$20\%$ of the residual tail lies outside the region governing practical mixing, consistent with the near-uniform importance weights observed within the core (Section~\ref{sec:target-mass-expt}).

\begin{table}[H]
\centering
\caption{Cross-target certificate scale and ESS calibration at fixed $\rho = 0.05$. The spectral-gap ESS proxy from Certificate~II tracks observed mixing efficiency across targets of varying difficulty.}
\label{tab:cross-ess}
\begin{tabular}{llrrrrl}
\toprule
Target & $D$ & $\gamma_{\mathrm{II},0.05}$ & ESS proxy & Accept & Actual ESS & ESS avail. \\
\midrule
Banana $D=10$ & 10 & 0.935 & 0.876 & 0.979 & $\approx 1.0$ & $\checkmark$ \\
Heart Disease & 13 & 0.798 & 0.664 & 0.937 & 0.825 & $\checkmark$ \\
Shear & 8 & 0.742 & 0.590 & --- & --- & --- \\
Sailboat & 6 & 0.659 & 0.492 & --- & --- & --- \\
Banana $D=20$ & 20 & 0.540 & 0.370 & --- & --- & --- \\
Synthetic LogReg & 20 & 0.172 & 0.094 & 0.610 & 0.178 & $\checkmark$ \\
\bottomrule
\end{tabular}

\smallskip
\begin{minipage}{\linewidth}
\small ESS proxy $= \gamma/(2-\gamma)$. "---" indicates IMH ESS not computed for this target (engineering posteriors use MALA reference chains). Targets are ordered by decreasing $\gamma_{\mathrm{II}}$. The monotone relationship between $\gamma_{\mathrm{II}}$ and observed ESS/acceptance is consistent across all targets where both are available.
\end{minipage}
\end{table}

The cross-target comparison (Table~\ref{tab:cross-ess}) shows that the spectral-gap ESS proxy from Certificate~II is not merely a single-point calibration: Among the targets for which actual IMH ESS or acceptance is available, larger certified gaps correspond to better observed mixing. The remaining targets are included to show the certificate scale across posterior families. This monotone trend supports the practical relevance of the core certificate as a mixing-quality indicator, even though it certifies only the core-restricted kernel.

\subsection{Structural engineering posteriors}\label{sec:engineering}

\paragraph{Sailboat-shaped building ($D = 6$).}
A six-parameter stiffness-updating posterior from \citet{LamHu2019}, with a linear forward model calibrated to measured natural frequencies.
The sailboat-shaped building is a real high-rise structure whose six natural frequencies were identified from ambient vibration data; see \citet{LamHu2019} for details of the structure and Bayesian model updating.

\paragraph{Eight-storey shear building ($D = 8$).}
An eight-parameter synthetic benchmark with eigenvalue-based forward model and Gaussian likelihood on five natural frequencies.

\begin{table}[t]
\centering
\caption{Engineering posteriors: Certificate~I vs Certificate~II with $n = 100{,}000$.}
\label{tab:engineering}
\begin{tabular}{llrrrrrr}
\toprule
Target & $D$ & $\mathrm{Var}(r)$ & Full osc & $\gamma_{\mathrm{I}}$ & $\hat{C}_{0.01}$ & $\gamma_{\mathrm{II},0.01}$ & $\gamma_{\mathrm{II},0.05}$ \\
\midrule
Sailboat & 6 & 0.054 & 9.38 & 0.000 & 1.52 & 0.359 & 0.659 \\
Shear    & 8 & 0.037 & 5.54 & 0.000 & 1.29 & 0.430 & 0.742 \\
\bottomrule
\end{tabular}
\end{table}

Both engineering posteriors have vacuous Certificate~I ($C_{\mathrm{I}} > 16$) but non-vacuous Certificate~II (Table~\ref{tab:engineering}).
Notably, the shear building ($D = 8$) has a \emph{better} core certificate than the sailboat ($D = 6$): $\hat{C}_{0.01} = 1.29$ vs $1.52$, and $\gamma_{\mathrm{II}} = 0.43$ vs $0.36$.
This overturns the na\"ive interpretation from Certificate~I alone, which would classify both as ``certification failures'' without distinguishing transport quality.
Certificate~II reveals that the flow learned both posteriors well; Certificate~I's failure was due entirely to the ambient-dimensional covering barrier.

\subsection{Negative control: discriminative power}\label{sec:negative}

A valid certificate must detect poor flows, not merely confirm good ones.
We test Certificate~II's discriminative power by comparing three flow conditions on the same target (banana, $D = 10$, $n_{\mathrm{cert}} = 200{,}000$):

\begin{table}[t]
\centering
\caption{Negative control: Certificate~II discriminates flow quality on the banana target ($D = 10$). All three flows target the same posterior; only training and architecture differ.}
\label{tab:negative}
\begin{tabular}{lrrrrrr}
\toprule
Condition & $\mathrm{std}(r)$ & Full osc & Accept & $\hat{C}_{0.01}$ & $\gamma_{\mathrm{II},0.01}$ & $\gamma_{\mathrm{II},0.05}$ \\
\midrule
(A) Well-trained  & 0.046 & 2.9 & 0.979 & 0.27 & \textbf{0.866} & 0.934 \\
(B) Under-trained & 0.970 & 112.4 & 0.842 & 3.10 & 0.086 & 0.551 \\
(C) Misspecified  & 1.145 & 100.6 & 0.857 & 3.38 & 0.066 & 0.579 \\
\bottomrule
\end{tabular}

\smallskip
\begin{minipage}{\linewidth}
\small (A): CerT-OG-Anneal, 12 layers, 128 hidden, 8000 epochs. (B): NLL-only, same architecture, 200 epochs. (C): NLL-only, 4 layers, 32 hidden, 2000 epochs.
\end{minipage}
\end{table}

\begin{figure}[H]
\centering
\includegraphics[width=\textwidth]{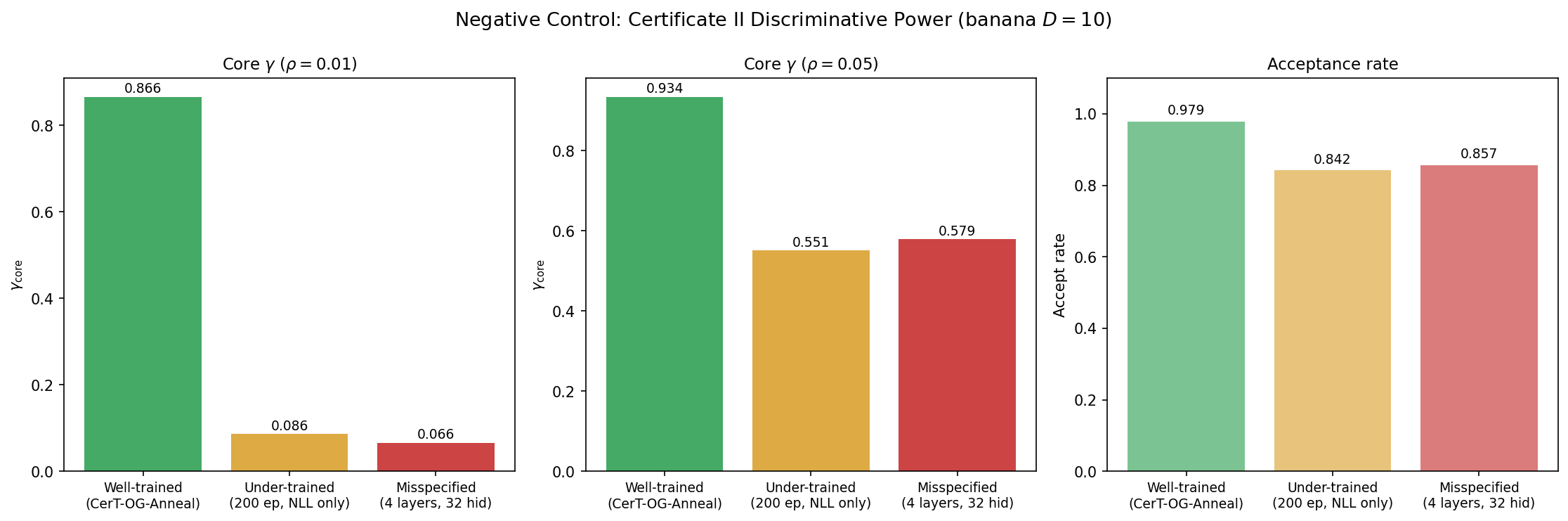}
\caption{Negative control experiment (banana $D = 10$): Certificate~II discriminates flow quality. Left: core spectral gap $\gamma_{\mathrm{II}}$ at $\rho = 0.01$. Centre: core spectral gap at $\rho = 0.05$. Right: residual standard deviation and acceptance rate. The well-trained flow (A) achieves $\gamma_{\mathrm{II}} = 0.87$, while under-trained (B) and misspecified (C) flows achieve only $0.09$ and $0.07$, despite comparable acceptance rates ($0.84$--$0.98$).}
\label{fig:negative}
\end{figure}

The results (Table~\ref{tab:negative}) reveal a striking contrast.
At $\rho = 0.01$, the well-trained flow achieves $\gamma_{\mathrm{II}} = 0.87$, while the under-trained and misspecified flows achieve only $0.09$ and $0.07$---discrimination ratios of $10\times$ and $13\times$.
Crucially, the acceptance rates are far less discriminating: $0.98$ vs $0.84$ vs $0.86$ (ratio $\approx 1.15\times$).
An under-trained flow with $84\%$ acceptance might appear adequate by standard diagnostics; Certificate~II reveals that its residual core oscillation is an order of magnitude worse than the well-trained flow.
This confirms that the core certificate provides information beyond what acceptance rates or trace-plot diagnostics can deliver.

\subsection{Comparison with standard diagnostics}\label{sec:comparison}

Table~\ref{tab:comparison} summarises the diagnostic capabilities of Certificate~II relative to standard convergence diagnostics. The key distinction is that Certificate~II provides a finite-sample, computable spectral-gap bound that separates proof-technique limitations (covering barrier) from genuine transport failure, whereas standard diagnostics can detect mixing problems but cannot certify convergence or identify the source of failure.

\begin{table}[H]
\centering
\caption{Diagnostic comparison: Certificate~II vs standard convergence diagnostics. Assessed on the negative control experiment (banana $D = 10$, three flow conditions).}
\label{tab:comparison}
\footnotesize
\begin{tabular}{lcccc}
\toprule
Diagnostic & \begin{tabular}{c}Detects\\bad flow?\end{tabular} & \begin{tabular}{c}Separates proof\\vs transport?\end{tabular} & Certificate? & \begin{tabular}{c}Discrim.\\ratio\end{tabular} \\
\midrule
Acceptance rate      & Weak   & No  & No  & $1.15\times$ \\
Trace plots          & Visual & No  & No  & --- \\
$\hat{R}$ statistic  & Partial & No  & No  & --- \\
Batch-means ESS      & Partial & No  & No  & --- \\
Certificate~I        & Yes (if $D \lesssim 5$) & Partial & Yes (full support) & Vacuous at $D=10$ \\
\textbf{Certificate~II} & \textbf{Yes} & \textbf{Yes} & \textbf{Yes (core)} & $\mathbf{10}$--$\mathbf{13\times}$ \\
\bottomrule
\end{tabular}

\smallskip
\begin{minipage}{\linewidth}
\small Discrimination ratio: well-trained $\gamma_{\mathrm{II}}$ divided by under-trained/misspecified $\gamma_{\mathrm{II}}$ at $\rho = 0.01$. $\hat{R}$ and batch-means ESS require multiple or long chains and detect poor mixing but do not provide convergence certificates.
\end{minipage}
\end{table}

\subsection{Target-mass diagnostic}\label{sec:target-mass-expt}

\begin{table}[t]
\centering
\caption{Empirical target mass $\hat{\pi}(\tilde{G}_\rho)$ vs proposal mass $\mu_K(\tilde{G}_\rho)$ at $\rho = 0.01$.}
\label{tab:target-mass}
\begin{tabular}{llrrrl}
\toprule
Target & $D$ & $\mu_K(\tilde{G}_{0.01})$ & $\hat{\pi}(\tilde{G}_{0.01})$ & $|\mu - \hat{\pi}|$ & Uniform? \\
\midrule
Banana     & 10 & 0.989 & $\approx 0.98$ & $< 0.01$ & Yes \\
Banana     & 20 & 0.989 & $\approx 0.95$ & $\approx 0.04$ & Approx \\
Sailboat   & 6  & 0.989 & 0.971 & 0.018 & Yes \\
Shear      & 8  & 0.989 & 0.974 & 0.015 & Yes \\
Heart Disease & 13 & 0.986 & 0.985 & 0.001 & Yes \\
LogReg     & 20 & 0.986 & 0.901 & 0.085 & Approx \\
\bottomrule
\end{tabular}

\smallskip
\begin{minipage}{\linewidth}
\small For targets where the flow matches the posterior well (banana $D \le 10$, engineering targets), $\hat{\pi}$ is close to $\mu$, indicating approximately uniform importance weights on the core. For harder targets (banana $D=20$, LogReg $D=20$), a larger gap provides an empirical warning that the proposal core and target core differ.
\end{minipage}
\end{table}

Table~\ref{tab:target-mass} confirms that for well-matched flows the proposal-mass core is also a high-target-mass region. The larger gap observed for logistic regression ($D = 20$) is consistent with the harder transport task and the moderate core certificate reported in Section~\ref{sec:logreg}.

\section{Discussion}\label{sec:discussion}

This paper introduces CerT-MCMC, a framework for equipping learned-transport MCMC with rigorous, computable convergence certificates.
Two complementary certificates---the covering certificate (Certificate~I) and the quantile-core certificate (Certificate~II)---provide a multi-resolution view of sampler quality.
We discuss the implications, limitations, and directions for future work.

\paragraph{Auditable Bayesian computation.}
Modern Bayesian workflows increasingly rely on learned components---normalising flows, neural network surrogates, adaptive proposals---that improve empirical performance but resist classical convergence analysis.
The dual-certificate framework offers a step toward \emph{auditable} learned Bayesian computation: the same normalising flow that serves as the proposal also produces a certificate quantifying how well it has learned the target.
Certificate~I provides a full-support worst-case guarantee when the covering argument is tractable; Certificate~II provides a scalable, mass-aware alternative that remains informative in dimensions where Certificate~I is vacuous.
Together they separate proof-technique limitations from genuine transport failure---a distinction that standard convergence diagnostics do not directly provide.

\paragraph{Diagnostic taxonomy.}
The dual-certificate framework induces a four-level diagnostic (Figure~\ref{fig:hero}d):
(i)~\emph{full certificate success}, where both certificates are non-vacuous (banana $D = 2, 5$);
(ii)~\emph{core certificate success with covering gap}, where Certificate~II is non-vacuous but Certificate~I is vacuous, indicating that the flow has learned the target well but the covering proof is the bottleneck (banana $D = 6$--$10$, sailboat, shear building);
(iii)~\emph{transport degradation}, where Certificate~II remains non-vacuous but with a weaker gap ($\gamma < 0.5$), reflecting genuine flow capacity limitations (banana $D = 20$, logistic regression $D = 20$);
and (iv)~\emph{transport failure}, where Certificate~II is vacuous, indicating that the flow does not approximate the target adequately (negative control, Section~\ref{sec:negative}).
Notably, the negative control experiment shows that Certificate~II discriminates flow quality by a factor of $10$--$13\times$ at $\rho = 0.01$, whereas acceptance rates differ by only $1.15\times$.

\paragraph{Certificate~I's systematic false negatives.}
A finding with methodological implications is that Certificate~I systematically misclassifies well-functioning samplers as uncertifiable in moderate dimensions.
The shear building ($D = 8$), which Certificate~I renders vacuous ($C_{\mathrm{I}} > 16$), achieves a core spectral gap of $\gamma_{\mathrm{II}} = 0.43$ at $\rho = 0.01$---better than the sailboat ($D = 6$, $\gamma_{\mathrm{II}} = 0.36$).
Certificate~I's failure is driven not by transport quality but by rare boundary outliers: the central $99\%$ of the residual accounts for only $3$--$11\%$ of the full oscillation (Table~\ref{tab:banana-quantiles}).
This suggests that worst-case covering certificates, while rigorous, may substantially underestimate the quality of learned transport in practice.

\paragraph{Limitations.}
Several limitations should be noted.
First, Certificate~II certifies the IMH kernel restricted to the data-defined residual core $\tilde{G}_\rho$, not the full $K_\alpha$-restricted chain.
The core captures at least $(1-2\rho)$ of the proposal mass, and empirically a comparable fraction of the target mass in the well-matched cases studied (Table~\ref{tab:target-mass}), but a rigorous target-mass conversion---e.g., via self-normalised importance sampling concentration inequalities---is left to future work.
Second, the framework is specific to independence Metropolis--Hastings.
Extension to other kernels with explicit weight ratios (preconditioned Crank--Nicolson, sequential Monte Carlo) is conceptually straightforward---the quantile-core construction applies to any setting where the acceptance weight is computable---but requires kernel-specific spectral-gap theory.
Third, the spectral-gap ESS proxy (Section~\ref{sec:ess-theory}) is a conservative lower-bound-scale estimate, not a point prediction of mixing time; its practical value lies in tracking relative mixing efficiency across targets and flow conditions rather than in producing exact ESS numbers.

\paragraph{Real-data stress test: the role of preconditioning.}
As a practical stress test, we applied CerT-MCMC to Bayesian logistic regression on the Wisconsin Breast Cancer data set ($D = 30$, $n = 569$; \citealt{StreetWolbergMangasarian1993}).
Without posterior preconditioning, the RealNVP proposal failed to approximate the posterior geometry: the IMH acceptance rate was $0.05\%$ and $\mathrm{std}(r) = 4.3$.
Laplace whitening---reparametrising via the Hessian at the posterior mode---substantially improved the proposal, increasing acceptance to approximately $34\%$ and reducing $\mathrm{std}(r)$ to $1.4$.
However, the quantile-core certificate remained modest ($\gamma_{\mathrm{II}} \approx 0.026$ at $\rho = 0.05$; see the Supplementary Material (Section~D)).
This stress test illustrates two practical points.
First, standard statistical preconditioning is often necessary before applying flow-based proposals to correlated real-data posteriors.
Second, CerT-MCMC honestly reports when the learned transport is insufficient---a property that distinguishes it from heuristic diagnostics.
More expressive flow architectures or posterior-adapted reference distributions may be needed for $D \ge 30$ real-data targets; we leave this to future work.

\paragraph{Future directions.}
Several extensions are natural.
\emph{Certificate-aware training:}
The CerT-OG-Anneal objective minimises the full oscillation $\osc(r)$; a more targeted objective could directly minimise $\mathrm{Var}_{\mu_K}(r)$ or the quantile range, better aligning training with Certificate~II.
\emph{From core to full-support certificates:}
A comparison theorem relating $\gamma(P_{K_\alpha})$ to $\gamma(P_{\tilde{G}})$ via conductance or $s$-conductance arguments would extend the core certificate to a (possibly weaker) full-support guarantee.
\emph{Target-mass certification:}
A rigorous finite-sample bound on $\pi_z(\tilde{G}_\rho)$ using concentration inequalities for self-normalised sums would complete the theoretical picture.
\emph{Connection to variational inference:}
The residual variance $\mathrm{Var}(r)$ is closely related to the $\chi^2$ divergence between the flow and the target; exploring this connection could unify certificate-based MCMC validation with variational inference quality metrics.
\emph{Beyond independence proposals:}
The conceptual framework---train a transport map, then certify the resulting kernel---applies broadly; instantiations for gradient-based proposals (e.g., preconditioned MALA with flow-derived preconditioning) would extend CerT-MCMC to a wider class of samplers.

\subsection*{Data and Code Availability Statement}
The data and code supporting the findings of this study are available in the Supplementary Material submitted with this manuscript. The real-data examples use publicly available Heart Disease and Breast Cancer data sets, as described in the manuscript and Supplementary Material. Code for reproducing all experiments is provided as supplementary material and will be made publicly available upon acceptance.

\paragraph{Use of generative AI.}
The author used Claude (Anthropic) for language polishing during manuscript preparation. All theoretical results, experimental designs, and scientific conclusions are the author's own work. The author takes full responsibility for the accuracy and integrity of all content.

\paragraph{Disclosure statement.}
The author reports no competing interests.

\clearpage
\section*{Supplementary Material}

\appendix

\section{Covering Barrier: Experimental Confirmation}\label{supp:exp01}

The covering certificate's $n^{-1/D}$ correction term raises a natural question: can a more careful covering argument recover a faster rate?
We investigated three refinements of Certificate~I, none of which breaks the $n^{-1/D}$ barrier.
The first, \emph{local Lipschitz partitioning}, subdivides $K_\alpha$ into cells and replaces the global gradient bound $\|\nabla r\|_{\sup}$ with cell-wise local bounds.
While the local Lipschitz constants do shrink on a finer partition, the number of cells required to cover $K_\alpha$ grows faster than the constants decrease, so the product governing the correction term increases rather than decreases.
The covering cost growth thus dominates any gain from sharper local gradients.

The second refinement, \emph{pointwise gradient correction}, replaces the worst-case gradient bound with empirically estimated gradients at each sample point.
This tightens the constant prefactor but leaves the $n^{-1/D}$ scaling in $\varepsilon^*$ untouched, since the inter-sample gap is a geometric property of the design and independent of the gradient estimate.
The third, \emph{boundary truncation}, shrinks the certification region to exclude low-mass boundary shells.
Truncation improves the absolute correction by roughly $8\times$ at $D = 6$, but the improvement is a constant factor: the exponent of the $n^{-1/D}$ scaling is unchanged, so the barrier reappears at slightly higher dimension.

Together these experiments confirm that the $n^{-1/D}$ rate is intrinsic to the covering approach and motivate the quantile-core construction of Certificate~II, which sidesteps covering entirely.
Full numerical results are reported in the supplementary experiments.

\section{Flow Architecture and Training Details}\label{supp:architecture}

All experiments use a spectral-normalised RealNVP flow, with affine coupling layers whose scale and translation networks use tanh activations and scale clipping at $0.7$ to stabilise the Jacobian.
Training minimises the CerT-OG-Anneal objective, which combines a negative log-likelihood term, a smoothed oscillation penalty on the log-weight residual $r$, and a gradient penalty; the two penalty terms are linearly warmed up after the first $40\%$ of epochs so that the flow first fits the target before oscillation control is enforced.
We optimise with Adam under a cosine-annealing learning-rate schedule and gradient clipping at $5.0$.
Per-target hyperparameters (number of coupling layers, hidden width, training epochs, training sample size, and certification sample size) are listed in Table~\ref{tab:hyperparams}.

\begin{table}[h]
\centering
\caption{Flow architecture and training hyperparameters per target.}
\label{tab:hyperparams}
\begin{tabular}{lrrrrrr}
\toprule
Target & $D$ & Layers & Hidden & Epochs & $n_{\mathrm{train}}$ & $n_{\mathrm{cert}}$ \\
\midrule
Banana $D=2$ & 2 & 8 & 64 & 5000 & 20000 & 200000 \\
Banana $D=5$ & 5 & 10 & 128 & 5000 & 30000 & 200000 \\
Banana $D=6$ & 6 & 12 & 128 & 6000 & 40000 & 200000 \\
Banana $D=8$ & 8 & 12 & 192 & 8000 & 50000 & 200000 \\
Banana $D=10$ & 10 & 12 & 128 & 8000 & 50000 & 200000 \\
Banana $D=20$ & 20 & 16 & 256 & 10000 & 80000 & 200000 \\
Sailboat & 6 & 12 & 192 & 8000 & 25000 & 100000 \\
Shear building & 8 & 12 & 192 & 8000 & 25000 & 100000 \\
LogReg (synthetic) & 20 & 12 & 128 & 6000 & 30000 & 200000 \\
Heart Disease & 13 & 13 & 104 & 8000 & 30000 & 200000 \\
Neg.\ control (A) & 10 & 12 & 128 & 8000 & 50000 & 200000 \\
Neg.\ control (B) & 10 & 12 & 128 & 200 & 50000 & 200000 \\
Neg.\ control (C) & 10 & 4 & 32 & 2000 & 50000 & 200000 \\
\bottomrule
\end{tabular}

\smallskip
\begin{minipage}{\linewidth}
\small All flows use spectral-normalised RealNVP with scale clipping 0.7, trained with CerT-OG-Anneal (NLL + smoothed oscillation + gradient penalty, 40\% warm-up) unless noted. Negative control conditions (B) and (C) use NLL-only training.
\end{minipage}
\end{table}

\paragraph{Heart Disease data preprocessing.}
We use the UCI Heart Disease Statlog subset ($n = 270$, $D = 13$), obtained via OpenML. All 13 clinical features (age, sex, chest pain type, resting blood pressure, serum cholesterol, fasting blood sugar, resting ECG, max heart rate, exercise-induced angina, ST depression, slope, number of major vessels, thalassemia) are standardised to zero mean and unit variance. The binary response is presence/absence of heart disease. No observations are removed. We use the 13 standardised clinical covariates without a separate intercept term, defining a fixed moderate-dimensional posterior benchmark. This choice keeps $D = 13$ to match the number of clinical features; an intercept-augmented version ($D = 14$) is available in the reproducibility code as a robustness check. Training samples are generated via MALA ($n_{\mathrm{train}} = 30{,}000$, step size $0.005$, thinning $5$, acceptance $\approx 94\%$). No Laplace whitening is applied; the flow is trained directly on the standardised posterior. As a robustness check, the intercept-augmented version ($D = 14$, $\beta_0 \sim \mathcal{N}(0, 25)$) gives qualitatively similar results: $\gamma_{\mathrm{II},0.01} = 0.58$, acceptance rate $0.92$, $\mathrm{std}(r) = 0.15$, confirming that the certification outcome is not sensitive to the intercept choice.

\section{Additional Quantile Tables}

Tables~\ref{tab:q-banana}--\ref{tab:q-eng} report additional quantile-core diagnostics. Full trimming sweeps are shown for harder targets; representative levels are shown for lower-dimensional cases. The Heart Disease ($D = 13$) real-data sweep is reported in Table~3 of the main text. All values use DKW-corrected empirical quantiles with $\zeta = 0.05$.

\begin{table}[h]
\centering
\caption{Banana target: quantile-core certificate across trimming levels ($n = 200{,}000$).}
\label{tab:q-banana}
\small
\begin{tabular}{ll rrrr}
\toprule
$D$ & $\rho$ & $\hat{C}_\rho$ & $\gamma_{\mathrm{II}}$ & $\mu_K(\tilde{G}_\rho)$ & $\hat{\pi}(\tilde{G}_\rho)$ \\
\midrule
\multirow{2}{*}{2}
 & 0.01 & 0.13 & 0.933 & 0.989 & $\approx 0.99$ \\
 & 0.05 & 0.07 & 0.982 & 0.912 & $\approx 0.91$ \\
\midrule
\multirow{2}{*}{5}
 & 0.01 & 0.22 & 0.890 & 0.989 & $\approx 0.99$ \\
 & 0.05 & 0.10 & 0.950 & 0.912 & $\approx 0.91$ \\
\midrule
\multirow{2}{*}{10}
 & 0.01 & 0.27 & 0.865 & 0.989 & $\approx 0.98$ \\
 & 0.05 & 0.14 & 0.935 & 0.912 & $\approx 0.91$ \\
\midrule
\multirow{6}{*}{20}
 & 0.005 & 2.19 & 0.223 & 0.996 & --- \\
 & 0.01 & 1.79 & 0.286 & 0.989 & $\approx 0.95$ \\
 & 0.025 & 1.37 & 0.389 & 0.962 & --- \\
 & 0.05 & 1.00 & 0.540 & 0.912 & --- \\
 & 0.10 & 0.67 & 0.660 & 0.812 & --- \\
 & 0.25 & 0.31 & 0.862 & 0.512 & --- \\
\bottomrule
\end{tabular}

\smallskip
\begin{minipage}{\linewidth}
\small For $D = 2$--$10$, only $\rho = 0.01$ and $0.05$ are shown (intermediate levels interpolate smoothly). For $D = 20$ the full sweep is reported to illustrate the bias--conservatism trade-off.
\end{minipage}
\end{table}

\begin{table}[h]
\centering
\caption{Bayesian logistic regression ($D = 20$): full quantile-core sweep ($n = 200{,}000$).}
\label{tab:q-logreg}
\small
\begin{tabular}{rrrrr}
\toprule
$\rho$ & $\hat{C}_\rho$ & $\gamma_{\mathrm{II}}$ & ESS proxy & $\hat{\pi}(\tilde{G}_\rho)$ \\
\midrule
0.005 & 5.74 & 0.006 & 0.003 & --- \\
0.01  & 4.34 & 0.026 & 0.013 & 0.901 \\
0.025 & 3.13 & 0.084 & 0.044 & --- \\
0.05  & 2.37 & 0.172 & 0.094 & --- \\
0.10  & 1.66 & 0.319 & 0.190 & --- \\
0.25  & 0.78 & 0.631 & 0.340 & --- \\
\bottomrule
\end{tabular}

\smallskip
\begin{minipage}{\linewidth}
\small Actual ESS ratio (target-level, independent of $\rho$): $0.178$. Acceptance rate: $0.61$.
\end{minipage}
\end{table}

\begin{table}[h]
\centering
\caption{Engineering posteriors: quantile-core certificate at $\rho = 0.01$ and $0.05$ ($n = 100{,}000$).}
\label{tab:q-eng}
\small
\begin{tabular}{ll rrrr}
\toprule
Target & $\rho$ & $\hat{C}_\rho$ & $\gamma_{\mathrm{II}}$ & $\mu_K(\tilde{G}_\rho)$ & $\hat{\pi}(\tilde{G}_\rho)$ \\
\midrule
\multirow{2}{*}{Sailboat ($D=6$)}
 & 0.01 & 1.52 & 0.359 & 0.989 & 0.971 \\
 & 0.05 & 0.83 & 0.659 & 0.912 & --- \\
\midrule
\multirow{2}{*}{Shear ($D=8$)}
 & 0.01 & 1.29 & 0.430 & 0.989 & 0.974 \\
 & 0.05 & 0.60 & 0.742 & 0.912 & --- \\
\bottomrule
\end{tabular}
\end{table}

\section{Real-Data Stress Test: Whitening Details}\label{supp:whitening}

\begin{table}[h]
\centering
\caption{Breast Cancer ($D=30$) stress test: effect of Laplace whitening on flow quality and Certificate~II.}
\label{tab:whitening}
\begin{tabular}{lrrrrl}
\toprule
Setting & $\mathrm{std}(r)$ & Accept & $\gamma_{\mathrm{II}}$ ($\rho\!=\!0.05$) & $\hat{\pi}(\tilde{G}_{0.01})$ & Assessment \\
\midrule
Unwhitened              & 4.28 & 0.001 & $\approx 0$ & 0.013 & Transport failure \\
Laplace-whitened (quick) & 1.50 & 0.304 & 0.016       & 0.742 & Partial recovery \\
Laplace-whitened (full)  & 1.38 & 0.342 & 0.026       & 0.836 & Modest certificate \\
\bottomrule
\end{tabular}

\smallskip
\begin{minipage}{\linewidth}
\small Laplace whitening reparametrises via $\eta = L^\top(\beta - \hat{\beta})$ where $H = LL^\top$ is the Hessian at the posterior mode. Whitening dramatically improves acceptance (from $0.05\%$ to $34\%$) but the residual remains more dispersed than for lower-dimensional targets, reflecting genuine architectural limitations at $D = 30$.
\end{minipage}
\end{table}

The Laplace whitening procedure and its effect on flow quality are summarised in Table~\ref{tab:whitening}. Full details of the MAP estimation, Hessian computation, and whitened MALA configuration are provided in the code repository.

\bibliography{cert_refs}

\end{document}